\documentclass{article}

\usepackage[preprint]{neurips_2026}

\usepackage[utf8]{inputenc} 
\usepackage[T1]{fontenc}    
\usepackage{hyperref}       
\usepackage{url}            
\usepackage{booktabs}       
\usepackage{amsfonts}       
\usepackage{nicefrac}       
\usepackage{microtype}      
\usepackage{xcolor}         
\usepackage{graphicx}
\usepackage{amsmath}
\usepackage{amsthm}
\usepackage{algorithm}
\usepackage{algorithmic}
\newtheorem{theorem}{Theorem}
\newtheorem{lemma}{Lemma}
\newtheorem{proposition}{Proposition}
\usepackage{float}
\usepackage{wrapfig}
\usepackage{stfloats}
\usepackage{caption}
\usepackage{needspace}

\title{LPDP: Inference-Time Reward Control for Variable-Length DNA Generation with Edit Flows}
\author{
  Jeongchan Kim \quad Yunkyung Ko \quad Jong Chul Ye \\
  KAIST AI \\
  \texttt{\{jchan.kim, jong.ye\}@kaist.ac.kr} \\
}

\begin{document}

\maketitle

\begin{abstract}
{We study the application of recent Edit Flows for  inference-time reward control for DNA sequence generation.} 
Unlike most reward-guided DNA generation {frameworks}, 
which operate on fixed-length sequence spaces,  {Edit Flows have a potential to} generate variable-length DNA through biologically plausible insertion, deletion, and substitution operations.
{In particular,} we propose \emph{Local Perturbation {Discrete} Programming (LPDP)}, a training-free, intermediate-state and action-aware local re-solving operator for variable-length DNA edit-action generators at inference time.
{More specifically,} at each guided rollout step, LPDP scores one-step root edits, retains a near-best root band, and re-ranks each retained root by solving a bounded local discrete program around its child sequence.
This local program uses the typed geometry of edit actions to focus on coherent substitution, insertion, or deletion subgraphs, and aggregates local continuations with either a hard Max backup or a soft log-sum-exponential (LSE) backup.
{We instantiate LPDP in two regimes: front-loaded reward tilting for enhancer optimization, where early edits are critical for establishing global regulatory sequence structure, and back-loaded reward tilting for exon–intron–exon inpainting, where late edits fine-tune splice-boundary contexts. }
%
%
Across enhancer and splice benchmarks, LPDP improves frozen-oracle reward under comparable search budgets while preserving base-flow and sequence-distribution diagnostics.
To our knowledge, LPDP is the first inference-time reward-tilting framework built specifically for variable-length DNA edit flows with explicit insertion, deletion, and substitution actions.
\end{abstract}

\begin{figure*}[t]
    \centering
    \includegraphics[width=\textwidth]{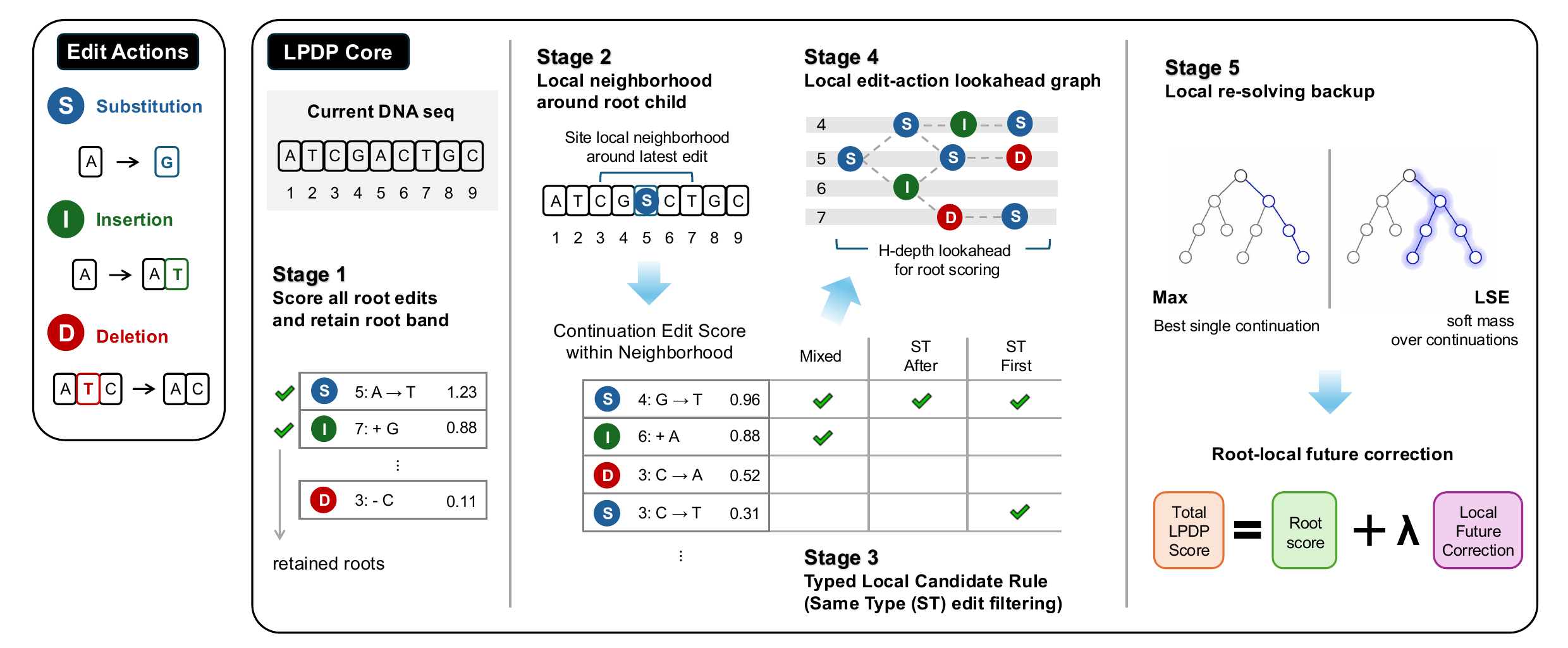}
    \caption{
    \textbf{Overview of LPDP.}
    LPDP re-solves each reward-guided rollout step at the level of edit actions.
    From the current sequence \(x_t\), it scores all valid substitution, insertion, and deletion root edits using the base-flow score and immediate oracle improvement, and retains a near-best root band for lookahead \textbf{(Stage 1)}.
    For each retained root, LPDP applies the root edit to obtain a child sequence and defines a site-local neighborhood around the latest edit site, within which continuation edits are scored \textbf{(Stage 2)}.
    It then applies a typed local candidate rule \textbf{(Stage 3)}:
    Mixed keeps the local top-\(K\) edits across all edit types, ST-after first ranks the mixed shortlist and then filters it by the same edit type as the current local parent edit, and ST-first first restricts to that edit type and then ranks within the typed subset.
    These candidates define a bounded \(H\)-depth local edit-action lookahead graph used only for root scoring \textbf{(Stage 4)}; the graph shown is a Mixed example.
    Finally, LPDP backs up this graph with Max, which selects the best single continuation, or LSE, which aggregates soft mass over plausible continuations \textbf{(Stage 5)}.
    It applies only the highest-scoring root edit and re-solves from the new sequence at the next rollout step.
    }
    \label{fig:lpdp_overview}
\end{figure*}

\section{Introduction}
\label{sec:introduction}

Reward-guided DNA design asks a generative model to propose DNA sequences with high predicted function while avoiding obvious departures from realistic sequence statistics.
{Specifically,} a sequence is functional if a downstream biological oracle predicts a desired effect, such as strong regulatory activity or correct splice-site formation; it is plausible if it still resembles realistic genomic DNA rather than an adversarial string optimized only for the oracle. 
We study this setting with frozen oracles, which are queried only at inference time and may be expensive or unavailable during generator training.

We consider two representative tasks. 
In enhancer optimization, the model seeks regulatory DNA sequences predicted to activate gene expression. 
In exon--intron--exon inpainting, the model fills an intronic region between two exon contexts and must create or preserve splice donor and acceptor signals needed for RNA splicing. 
These tasks stress complementary regimes: enhancer design depends on global regulatory sequence patterns, while splice inpainting depends on local boundary motifs and context.

Most reward-guided DNA generators are formulated over fixed-length sequence spaces, often using token replacement, masking, or denoising on \(X=\Sigma^N\)~\citep{dasilva2024dnadiffusion,wang2025drakes,dreamdna2026}. 
Fixed-length modeling is convenient, but many genomic design problems involve sequences that naturally occur at different lengths, such as regulatory elements, introns, and edit-induced variants. 
Insertions and deletions are therefore not merely nuisance operations; they are biologically meaningful changes that alter local sequence context. 
This motivates reward-guided generation methods that can operate directly over variable-length DNA.

Among recent generative models, Edit Flows~\citep{havasi2025editflows} provide such a backbone by generating finite sequences through explicit substitution, insertion, and deletion transitions. 
This action-level structure exposes more than a token sequence: at an intermediate rollout state, a candidate edit changes both the current sequence and the short local continuations that become available afterward. 
{This setting makes standard diffusion guidance difficult to apply directly: continuous guidance acts through gradients on fixed-dimensional noised states, while many discrete diffusion methods rely on fixed-position or masked-token updates.
In contrast, edit flows use valid edit actions that can change sequence length and coordinates.}
The challenge is to exploit this action geometry at inference time {to maximize the reward} without expanding the full edit tree.

{To address this challenge}, we propose \emph{Local Perturbation Discrete Programming (LPDP)}, a training-free, intermediate-state and action-aware reward-tilting method for pretrained DNA edit flows. 
LPDP treats each rollout decision as a root re-solving problem. 
At a current sequence \(x_t\), it scores all valid one-step root edits, retains a score band of competitive roots, and re-ranks each retained root by solving a bounded local discrete program around the child sequence induced by that root. 
Rather than searching a global frontier of partial sequences, LPDP asks a root-local question: if this edit is chosen now, what short local continuation makes it valuable?

{In particular, we consider two design axes for the local discrete program.} 
First, LPDP uses the typed geometry of edit actions to construct local candidate graphs. 
We compare \textbf{Mixed}, which keeps mixed-type local candidates, \textbf{ST-after}, which first forms a mixed local top-\(K\) shortlist and then filters by the previous edit type, and \textbf{ST-first}, which filters by edit type before ranking. 
These same-type (ST) rules define typed local subgraphs for bounded edit-flow re-solving. 
Second, LPDP aggregates local futures with either a hard Max backup, which follows the best local continuation, or a soft log-sum-exponential (LSE) backup, which aggregates multiple plausible continuations through a soft log-partition.

We instantiate LPDP in two regimes. 
For enhancer optimization, we apply front-loaded reward tilting, where early edits can shape the global regulatory sequence trajectory. 
For exon--intron--exon inpainting, we apply back-loaded reward tilting, where late edits act after a coarse intron context and splice-boundary candidates have formed. 
Across these tasks, LPDP improves frozen-oracle reward under comparable search budgets while preserving base-flow and sequence-distribution diagnostics. 

\section{Background and Related Work}
\label{sec:background_related}

\noindent\textbf{Fixed-length DNA generators and reward guidance.}
Modern DNA generators include masked language models~\citep{ji2021dnabert,dallatorre2025nucleotide}, discrete diffusion models~\citep{avdeyev2023ddsm,dasilva2024dnadiffusion,sarkar2024d3,yang2026d3lm}, and flow-matching variants~\citep{gat2024dfm,dreamdna2026}.
Existing reward-guided methods, including DRAKES~\citep{wang2025drakes}, SVDD-style guidance~\citep{li2024svdd}, and test-time diffusion refinement~\citep{uehara2025rewardguided}, mostly act on fixed-length token, mask, or denoising states.
LPDP targets a complementary setting: variable-length edit-flow trajectories with explicit insertion, deletion, and substitution actions.
Because these fixed-length guidance methods do not directly instantiate on the edit-action state space, we compare against search and particle baselines on the same frozen edit-flow generator.

\noindent\textbf{Edit flows and typed action geometry.}
Discrete flow matching defines continuous-time generative processes over discrete objects~\citep{gat2024dfm}. 
Edit Flows extend this view to sequence spaces with edit operations, using insertions, deletions, and substitutions as transitions over finite sequences~\citep{havasi2025editflows}. 
This transition structure matters for inference-time control. 
A substitution changes a base while preserving sequence length; an insertion or deletion changes length and shifts downstream coordinates. 
Thus edit actions induce a typed local geometry that is absent from fixed-position token replacement models. 
LPDP uses this geometry directly: it scores root edits, constructs local continuation graphs around retained roots, and applies a short discrete program on those graphs.

\noindent\textbf{Inference-time search and trajectory guidance.}
Our baselines represent standard ways to allocate test-time reward computation in discrete generation. 
CEM~\citep{rubinstein1999cem} refits a sampling distribution toward high-scoring sampled elites; beam search~\citep{freitag2017beam} expands a fixed-width frontier of partial trajectories; SMC~\citep{doucet2001smc} maintains and resamples a reward-weighted particle population; and TDS-style twisted samplers bias trajectories using twisting potentials~\citep{wu2023tds,whiteley2014twisted,heng2020controlled}. 
These methods spend computation across sampled trajectories, particles, or sequence-level frontiers. 
LPDP uses a different allocation rule: it first scores all one-step root edits exactly, retains competitive roots, and then re-solves a small local action graph around each retained root. 
Thus the contrast is not simply search versus no search, but \emph{sequence-level frontier breadth} versus \emph{root-conditioned local re-solving}.

\noindent\textbf{Soft discrete programming.}
The log-sum-exponential Bellman operator is standard in entropy-regularized control, path-integral control, and maximum causal entropy models~\citep{kappen2005pathintegral,todorov2009linearly,ziebart2010maximum}. 
In those settings, a soft value aggregates multiple high-return futures rather than selecting only one best path. 
LPDP-LSE uses the same operator on the restricted local graph induced by a retained root, while LPDP-Max corresponds to the low-temperature hard-selection limit. 
This lets LPDP compare two local backups: a hard exploitation backup over the single best local continuation and a soft backup over the tilted mass of multiple plausible continuations.

\section{Methods} 
\label{sec:method}

LPDP is a root-conditioned local re-solving operator for pretrained edit flows with frozen reward oracles.
Rather than allocating inference-time computation to a sequence-level frontier, LPDP spends its oracle budget locally around root edits that are already competitive under exact one-step scoring.
Figure~\ref{fig:lpdp_overview} summarizes the five stages used at each guided rollout step.
Importantly, the local lookahead graph is used only to re-rank the current root edit: after the backup, LPDP applies a single root edit and then repeats the same procedure from the new sequence.

\subsection{Stage 1: Score all root edits and retain root band}
\label{sec:edit_actions}
\label{sec:root_band}

We consider reward-guided generation with an Edit-Flow model~\citep{havasi2025editflows} pretrained on DNA sequences.
Let \(\mathcal X\) be the set of finite DNA sequences over alphabet \(\mathcal V=\{A,C,G,T\}\).
For any sequence \(x\in\mathcal X\), let \(\mathcal A(x)\) denote the set of edits that are valid at \(x\).
An edit action is written as \(a=(s,e,v)\in\mathcal A(x)\), where \(s\) is a valid site, \(e\in\{\mathrm{ins},\mathrm{del},\mathrm{sub}\}\) is the edit type, and \(v\in\mathcal V\) is the nucleotide used for insertion or substitution.
For deletion, the token field is ignored.
Applying an edit is deterministic; we write \(f(x,a)\) for the sequence obtained by applying \(a\) to \(x\).

At inference step \(t\), the pretrained edit-flow model defines a Continuous-Time Markov Chain (CTMC) rate over variable-length sequences, supported only on one-edit successors \(x'=f(x,a)\).
The rate \(u_t^\theta(x'\mid x)\) is the learned edit-flow transition rate from \(x\) to \(x'\) at time \(t\).
We convert these rates into a normalized base edit proposal,
\begin{equation}
\label{eq:base_edit_proposal}
    p_0(a\mid x,t)
    =
    \frac{u_t^\theta(f(x,a)\mid x)}
    {\sum_{a'\in\mathcal A(x)}u_t^\theta(f(x,a')\mid x)}.
\end{equation}
Thus \(p_0(a\mid x,t)\) is the base-model probability of choosing edit \(a\) among all valid one-edit transitions from \(x\) at time \(t\).
This exposes the edit type, site, and token of each transition as local structure for LPDP, without assuming that any edit type is globally dominant.

The reward oracle is denoted by \(R:\mathcal X\to\mathbb R\).
{Unlike noised or masked intermediate states in many diffusion samplers, an edit-flow intermediate state is itself a complete DNA sequence. Thus the frozen DNA oracle can be queried on both the current sequence and its one-edit successor.}
The immediate reward change caused by edit \(a\) is
\begin{equation}
    \Delta R(x,a)=R(f(x,a))-R(x).
\end{equation}

Given a reward scale \(\beta>0\), we define the one-step tilted score
\begin{equation}
\label{eq:root_score}
    q_t(x,a)
    =
    \log p_0(a\mid x,t)
    +
    \beta\,\Delta R(x,a).
\end{equation}
The first term keeps the edit close to the pretrained edit-flow model, while the second term favors edits that increase the reward.
LPDP evaluates every valid root edit using Eq.~\eqref{eq:root_score}.
Let
\begin{equation}
    q_t^\star(x_t)=\max_{a\in\mathcal A(x_t)} q_t(x_t,a)
\end{equation}
be the best one-step root score.
For a tolerance \(\delta\ge0\), define the uncapped score band
\begin{equation}
\label{eq:uncapped_root_band}
    \mathcal B_\delta(x_t)
    =
    \left\{
    a\in\mathcal A(x_t):
    q_t(x_t,a)\ge q_t^\star(x_t)-\delta
    \right\}.
\end{equation}
With a root cap \(K_{\mathrm{root}}\), the root band used by LPDP is
\begin{equation}
\label{eq:root_band}
    \mathcal B_{\delta,K_{\mathrm{root}}}(x_t)
    =
    \operatorname{TopK}_{K_{\mathrm{root}}}
    \left(\mathcal B_\delta(x_t);\ q_t(x_t,\cdot)\right),
\end{equation}
where the top-\(K\) operation is taken by the one-step score \(q_t\).
The word ``band'' refers to a score band around the best one-step edit, not to a genomic interval: the root band contains edits that are competitive under exact one-step scoring, regardless of where they occur in the sequence.
The band is LPDP's first approximation: rather than expanding lookahead at every valid root edit, LPDP spends lookahead only on near-best roots under exact one-step scoring.

\begin{lemma}[Root-band truncation]
\label{lem:root_band}
Every root edit outside the uncapped band \(\mathcal B_\delta(x_t)\) has one-step tilted score strictly below \(q_t^\star(x_t)-\delta\); restricting root lookahead to \(\mathcal B_\delta(x_t)\) thus discards only edits whose exact one-step score is at least \(\delta\) below the optimum.
\end{lemma}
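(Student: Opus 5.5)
The argument is essentially definitional, so I will unfold Eq.~\eqref{eq:uncapped_root_band} and take the set-theoretic complement.

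First I would check that the objects are well defined. For finite \(x_t\), the action set \(\mathcal A(x_t)\) is finite, since there are finitely many sites, three edit types and four nucleotides. Assuming it is nonempty and every rate \(u_t^\theta(f(x_t,a)\mid x_t)\) is positive, each score \(q_t(x_t,a)\) in Eq.~\eqref{eq:root_score} is a finite real number. Hence the maximum \(q_t^\star(x_t)\) is attained. If some rate is zero, then \(\log p_0=-\infty\); such an edit still falls below any finite threshold, so the argument carries over with scores taking values in \([-\infty,\infty)\). I would state this convention explicitly.

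Next, fix \(a\in\mathcal A(x_t)\setminus\mathcal B_\delta(x_t)\). By the definition of \(\mathcal B_\delta(x_t)\), the condition \(q_t(x_t,a)\ge q_t^\star(x_t)-\delta\) fails. Because the reals are totally ordered, this gives \(q_t(x_t,a)<q_t^\star(x_t)-\delta\), which is the first claim. Rearranging, \(q_t^\star(x_t)-q_t(x_t,a)>\delta\): every discarded edit sits strictly more than \(\delta\) below the optimum, and in particular at least \(\delta\) below it.

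For the second claim, I would observe that restricting root lookahead to \(\mathcal B_\delta(x_t)\) discards exactly the complement \(\mathcal A(x_t)\setminus\mathcal B_\delta(x_t)\), so the bound above covers every discarded edit. I would also note that the band always contains a maximizer \(a^\star\), because \(q_t(x_t,a^\star)=q_t^\star(x_t)\ge q_t^\star(x_t)-\delta\) when \(\delta\ge 0\). Thus the band is nonempty and the optimal one-step edit is never lost.

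There is no real obstacle here. The only care needed is to keep the strict inequality (from negating \(\ge\)) separate from the weaker "at least \(\delta\)" phrasing, and to handle possibly zero base rates. I would add that the statement concerns the uncapped band only. The capped band \(\mathcal B_{\delta,K_{\mathrm{root}}}\) can additionally drop in-band edits, and the lemma makes no claim about those.
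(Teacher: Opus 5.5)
Your proposal is correct and follows essentially the same argument as the paper: unfold the definition of \(\mathcal B_\delta(x_t)\) and negate the membership condition to get \(q_t(x_t,a)<q_t^\star(x_t)-\delta\), noting as the paper does that the \(K_{\mathrm{root}}\) cap is a separate truncation. Your additional remarks (finiteness of \(\mathcal A(x_t)\), the convention for zero base rates, and that the maximizer always lies in the band) are harmless refinements the paper leaves implicit.
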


The lemma is deliberately a root-level statement: LPDP trusts exact one-step scoring to identify the root edits worth spending lookahead, while a low one-step edit may in principle have unusually good future value.
The proof is in Appendix~\ref{app:proof_root_band}.

\subsection{Stage 2: Local neighborhood around root child}
\label{sec:local_neighborhood}

For each retained root edit \(a\in\mathcal B_{\delta,K_{\mathrm{root}}}(x_t)\), let
\(y_a=f(x_t,a)\) be the child sequence obtained after applying \(a\).
LPDP evaluates this root by asking a local continuation question: if \(a\) were chosen now, what short sequence of nearby edits would make this child valuable?

A continuation state \(z\) is reached by a sequence of local edits, and \(a_{\mathrm{prev}}\) denotes the most recent edit used to reach \(z\).
Let \(\alpha(a_{\mathrm{prev}})\) denote the anchor site of the previous edit in the current sequence: the substituted site for substitutions, and the nearby resulting site for insertions or deletions.
For radius \(r\), LPDP defines the site-local neighborhood
\begin{equation}
\label{eq:local_neighborhood}
    \mathcal N_r(z,a_{\mathrm{prev}})
    =
    \left\{
    (s,e,v)\in\mathcal A(z):
    |s-\alpha(a_{\mathrm{prev}})|\le r
    \right\}.
\end{equation}
This neighborhood specifies where LPDP is allowed to look after a retained root; the next stage specifies which candidates inside the neighborhood are kept.
For an internal lookahead depth \(i\), each continuation edit \(b\in\mathcal N_r(z,a_{\mathrm{prev}})\) is scored by the same tilted form,
\begin{equation}
\label{eq:local_q}
    q_{t+i}(z,b)
    =
    \log p_0(b\mid z,t+i)
    +
    \beta\left[R(f(z,b))-R(z)\right].
\end{equation}

\subsection{Stage 3: Typed local candidate rules}
\label{sec:same_type}

\begin{figure}[b]
    \centering
    \includegraphics[width=\linewidth]{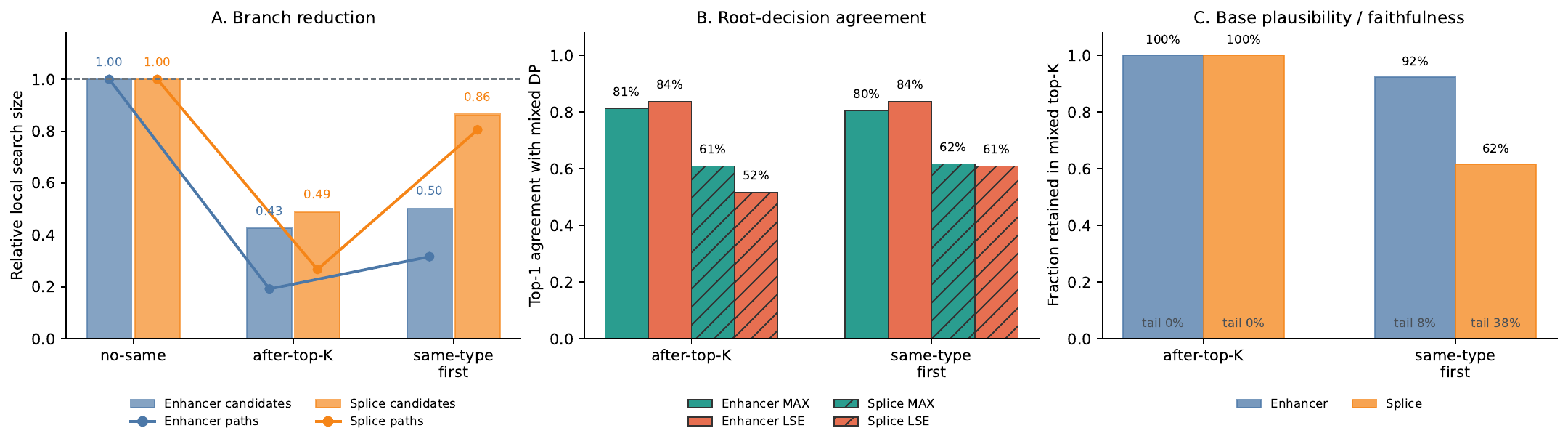}
    \caption{
    \textbf{Typed local candidate rules as structured local approximations.}
    We compare the Mixed candidate rule with two same-type (ST) pruning rules under the main 256-step schedule and 16-step guided window.
    (A) ST rules reduce the retained local search space relative to Mixed.
    (B) Despite this reduction, they often select the same top root edit as Mixed local DP.
    (C) ST-after is the conservative typed rule: all retained candidates remain inside the mixed local top-\(K\) shortlist, whereas ST-first can introduce candidates outside that shortlist.
    Full diagnostics, including path-mass efficiency and root-type breakdowns, are in Appendix~\ref{app:sametype_diagnostics}.
    }
    \label{fig:sametype_main}
\end{figure}

Let \(e(a)\) denote the edit type of action \(a\), and let \(K_{\mathrm{loc}}\) be the local cap.
A site-local neighborhood \(\mathcal N_r(z,a_{\mathrm{prev}})\) can still contain substitutions, insertions, and deletions after locality truncation.
LPDP therefore constructs the final local candidate set \(\mathcal C(z,a_{\mathrm{prev}})\subseteq\mathcal N_r(z,a_{\mathrm{prev}})\) through a small family of typed candidate rules.
We do not assume that same-type continuations are globally optimal.
Instead, we compare three candidate rules, abbreviated as \textbf{Mixed}, \textbf{ST-after}, and \textbf{ST-first}. Mixed is the reference local rule, while ST-after and ST-first are typed local subgraph approximations that trade off mixed-type flexibility, base-prior faithfulness, and local search size.

\noindent\textbf{Mixed.}
The mixed-type local set is
\begin{equation}
\label{eq:c_none}
    \mathcal C_{\mathrm{none}}(z,a_{\mathrm{prev}})
    =
    \operatorname{TopK}_{K_{\mathrm{loc}}}
    \left(\mathcal N_r(z,a_{\mathrm{prev}});\ p_0(\cdot\mid z,t)\right).
\end{equation}
This keeps the most likely local edits under the base flow, allowing substitutions, insertions, and deletions to compete together.

\noindent\textbf{Same-type after top-\(K\) (ST-after).}
First form the mixed local shortlist
\begin{equation}
    \widetilde{\mathcal C}(z,a_{\mathrm{prev}})
    =
    \mathcal C_{\mathrm{none}}(z,a_{\mathrm{prev}}).
\end{equation}
Then keep same-type edits inside that shortlist:
\begin{equation}
\label{eq:c_after}
    \mathcal C_{\mathrm{after}}(z,a_{\mathrm{prev}})
    =
    \left\{b\in\widetilde{\mathcal C}(z,a_{\mathrm{prev}}): e(b)=e(a_{\mathrm{prev}})\right\},
\end{equation}
with fallback to \(\widetilde{\mathcal C}\) if the filtered set is empty.
ST-after first asks which local edits are base-plausible, and only then restricts to the typed subgraph.

\noindent\textbf{Same-type first (ST-first).}
The ST-first set is
\begin{equation}
\label{eq:c_first}
    \mathcal C_{\mathrm{first}}(z,a_{\mathrm{prev}})
    =
    \operatorname{TopK}_{K_{\mathrm{loc}}}
    \left(
    \left\{b\in\mathcal N_r(z,a_{\mathrm{prev}}):e(b)=e(a_{\mathrm{prev}})\right\};\ p_0(\cdot\mid z,t)
    \right),
\end{equation}
again with fallback to Eq.~\eqref{eq:c_none} if the same-type set is empty.
ST-first commits to the typed neighborhood before ranking, and is therefore the stronger typed subproblem.

\begin{proposition}[Conservative ST-after pruning]
\label{prop:after_topk}
Every candidate \(b\in\mathcal C_{\mathrm{after}}(z,a_{\mathrm{prev}})\) has mixed-type local base-prior rank at most \(K_{\mathrm{loc}}\) within \(\mathcal N_r(z,a_{\mathrm{prev}})\).
In contrast, a candidate selected by ST-first can have mixed-type local rank greater than \(K_{\mathrm{loc}}\).
\end{proposition}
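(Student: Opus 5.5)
The first claim follows by unpacking definitions, and the second by exhibiting a small counterexample. Fix \(z\) and \(a_{\mathrm{prev}}\), write \(\mathcal N=\mathcal N_r(z,a_{\mathrm{prev}})\), and let the mixed-type local rank of \(b\in\mathcal N\) be its position when \(\mathcal N\) is sorted by \(p_0(\cdot\mid z,t)\) in decreasing order, with the same tie-breaking rule that \(\operatorname{TopK}\) uses.

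\emph{First claim.} By Eq.~\eqref{eq:c_none}, \(\widetilde{\mathcal C}=\mathcal C_{\mathrm{none}}\) consists exactly of the elements of \(\mathcal N\) with mixed rank at most \(K_{\mathrm{loc}}\). There are two cases.
If the same-type filter in Eq.~\eqref{eq:c_after} is nonempty, then \(\mathcal C_{\mathrm{after}}\subseteq\widetilde{\mathcal C}\) directly.
If it is empty, the fallback returns \(\widetilde{\mathcal C}\) itself.
In both cases \(\mathcal C_{\mathrm{after}}\subseteq\widetilde{\mathcal C}\), so every retained \(b\) has mixed rank at most \(K_{\mathrm{loc}}\).

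\emph{Second claim.} This is an existence statement, so I will construct an instance.
Take a previous edit with \(e(a_{\mathrm{prev}})=\mathrm{sub}\). Choose a neighborhood \(\mathcal N\) containing at least \(K_{\mathrm{loc}}\) insertion or deletion edits whose base probabilities strictly exceed those of every substitution in \(\mathcal N\), and containing at least one substitution.
Such rates are realizable because the paper places no constraints on \(u_t^\theta\) beyond positivity on valid one-edit successors. Equivalently, one can argue that the statement concerns the selection rule for arbitrary rate vectors.
In this instance the typed set is nonempty, so ST-first applies no fallback and returns the top substitution. That substitution has mixed rank at least \(K_{\mathrm{loc}}+1\).

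The main point needing care is tie-breaking: "rank at most \(K_{\mathrm{loc}}\)" must be read with the same deterministic tie-break as \(\operatorname{TopK}\). Using strict inequalities in the counterexample sidesteps ties entirely. I would also remark that the neighborhood must be large enough to hold \(K_{\mathrm{loc}}+1\) edits, which holds for \(r\ge1\) and moderate \(K_{\mathrm{loc}}\), since each site admits several insertions and substitutions.
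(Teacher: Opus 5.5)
Your proof is correct and follows the paper's argument. The first claim rests on $\mathcal C_{\mathrm{after}}\subseteq\widetilde{\mathcal C}=\mathcal C_{\mathrm{none}}$, and the second on the fact that ST-first ranks only after type filtering. You are more careful than the paper, which leaves the fallback case and tie-breaking implicit and only asserts the second claim informally; your explicit instance, with at least $K_{\mathrm{loc}}$ insertions/deletions outranking every substitution when $e(a_{\mathrm{prev}})=\mathrm{sub}$, actually exhibits it.
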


Proposition~\ref{prop:after_topk} identifies ST-after as the conservative member of the typed candidate-rule family: it imposes type coherence without introducing edits outside the base mixed top-\(K\) shortlist.
ST-first instead commits to the typed neighborhood before ranking, which can be useful when the reward favors a more focused edit mode.
We therefore report Mixed, ST-after, and ST-first as complementary candidate-rule choices throughout the experiments.
Figure~\ref{fig:sametype_main} summarizes whether these rules reduce local search, preserve the Mixed local-DP root decision, and remain faithful to the base mixed top-\(K\) shortlist.

\subsection{Stage 4: Local edit-action lookahead graph}
\label{sec:local_graph}

Using the candidate rule from Stage~3, LPDP builds a bounded root-local lookahead graph for each retained root.
Mathematically, this graph is rooted at \(y_a=f(x_t,a)\).
A node is an intermediate sequence \(z\), and an edge \(z\to f(z,b)\) is a valid continuation edit \(b\in\mathcal C(z,a_{\mathrm{prev}})\).
Conditioning the candidate set on \(a_{\mathrm{prev}}\) keeps the graph anchored to the most recent local edit.

Let \(H\) be the total LPDP lookahead horizon.
After a root edit is fixed for evaluation, the graph is expanded for at most \(H-1\) continuation edits, so \(H=1\) corresponds to pure one-step root scoring and \(H>1\) adds hypothetical local continuations below the root.
This graph is not a multi-edit commitment: it is used only to compute a root-local future correction, after which LPDP applies a single root edit and re-solves at the next rollout step.
The graph remains tractable because the root band, site-local radius, local cap, and horizon are all bounded.

\subsection{Stage 5: Local re-solving backup}
\label{sec:max_lse_backup}

Given a bounded local graph, LPDP computes a root-local continuation value with either a soft LSE backup or a hard Max backup.
The terminal value is \(V_0(z,a_{\mathrm{prev}})=0\).

\noindent\textbf{Soft LSE backup.}
Following entropy-regularized control~\citep{kappen2005pathintegral,todorov2009linearly,ziebart2010maximum}, we replace the hard maximum over local continuations by a soft log-partition.
The LSE backup takes the soft Bellman form:
\begin{equation}
\label{eq:lse_backup}
V_h^{\mathrm{lse}}(z,a_{\mathrm{prev}})
=
\tau
\log
\sum_{b\in\mathcal C(z,a_{\mathrm{prev}})}
\exp\left(
\frac{q(z,b)+\gamma V_{h-1}^{\mathrm{lse}}(f(z,b),b)}{\tau}
\right),
\end{equation}
where \(\tau>0\) is the soft backup temperature and \(\gamma\in[0,1]\) is a local lookahead discount; \(\mathcal C\), \(q\), and \(f\) are as defined in Stages~1--4.
This backup assigns high value to a root when its restricted local graph contains substantial reward-tilted mass over short continuations, rather than only a single high-scoring continuation.
When \(\gamma=1\), the recursion is exactly the log-partition function over all length-\(h\) edit paths in the restricted local graph; Theorem~\ref{thm:path_partition} in Appendix~\ref{app:proof_partition} formalizes this path-space interpretation.

\noindent\textbf{Hard Max backup.}
The Max backup is
\begin{equation}
\label{eq:max_backup}
V_h^{\max}(z,a_{\mathrm{prev}})
=
\max_{b\in\mathcal C(z,a_{\mathrm{prev}})}
\left[
q(z,b)+\gamma V_{h-1}^{\max}(f(z,b),b)
\right],
\end{equation}
which follows the single best local continuation inside the same restricted graph.
By the standard \(\tau\to0^+\) limit of the log-sum-exponential operator, Max is the low-temperature limit of LSE.
When \(\gamma=1\), the difference between the LSE and Max values is bounded by the soft path entropy term \(\tau\log|\mathcal T_h(z,a_{\mathrm{prev}})|\), where \(\mathcal T_h(z,a_{\mathrm{prev}})\) denotes the finite set of length-\(h\) local paths in the restricted graph (Theorem~\ref{thm:max_limit}, Appendix~\ref{app:proof_max_limit}).
Max and LSE are therefore two backup regimes for the same restricted local discrete program: Max exploits the single best local future, while LSE preserves soft path mass over multiple plausible futures.

\noindent\textbf{Sensitivity to typed pruning.}
Typed candidate rules and backup operators affect different parts of LPDP.
The candidate rule determines which local edit-action graph is solved, whereas the backup determines how paths on that graph are aggregated.
Under Max, pruning the mixed graph leaves the local value unchanged whenever the best mixed path is still retained (Proposition~\ref{prop:sametype_recovery}, Appendix~\ref{app:proof_pruning}).
Under LSE, even if the best path remains, pruning can change the value by removing reward-tilted mass from other plausible paths, creating a log-partition gap (Proposition~\ref{prop:soft_pruning_gap}, Appendix~\ref{app:proof_pruning}).
Thus, typed candidate rules define the local graph approximation, while Max and LSE define the aggregation regime on that graph.

\noindent\textbf{Root-local future correction and final score.}
Given either backup, LPDP scores a root edit \(a\) by
\begin{equation}
\label{eq:lpdp_root_score}
    S_{\mathrm{LPDP}}(x_t,a)
    =
    q_t(x_t,a)
    +
    \lambda V^{\mathrm{loc}}_{H-1}(f(x_t,a),a),
\end{equation}
where \(V^{\mathrm{loc}}\) is either \(V^{\max}\) or \(V^{\mathrm{lse}}\).
Here \(\beta\) controls the reward tilt in the root and continuation scores, while \(\lambda\ge0\) damps the approximate local future correction computed on the truncated graph.
LPDP selects the root edit with the largest \(S_{\mathrm{LPDP}}\) among edits in \(\mathcal B_{\delta,K_{\mathrm{root}}}(x_t)\).

This final score also gives LPDP-LSE a restricted soft-control interpretation.
Entropy-regularized control favors high-reward trajectories while penalizing deviations from a reference process, yielding soft values that are log-partitions over reward-tilted reference paths.
LPDP-LSE has the same structure only on the restricted root-local graph: it aggregates base-flow path mass tilted by exponentiated reward increments to estimate a local future correction.

If \(H=1\) or \(\lambda=0\), the continuation term vanishes and LPDP reduces to exact one-step planning over the root band.
At the other extreme, with \(\lambda=1\), the full root action set, and 
\(\mathcal C(z,a_{\mathrm{prev}})=\mathcal A(z)\) at every intermediate state, 
LPDP recovers finite-horizon dynamic programming on the full edit graph.
Otherwise, LPDP is exact root scoring plus a damped local future correction.

\subsection{The algorithm}
Algorithm~\ref{alg:lpdp_compact} combines the five stages into the per-step LPDP procedure: at rollout step \(t\), evaluate all valid root edits exactly via Eq.~\eqref{eq:root_score}, retain the score band of near-best roots via Eq.~\eqref{eq:root_band}, build a bounded local graph around each retained child using Eqs.~\eqref{eq:local_neighborhood} and \eqref{eq:c_none}--\eqref{eq:c_first}, back it up with Eq.~\eqref{eq:max_backup} or Eq.~\eqref{eq:lse_backup}, and apply the maximizer of Eq.~\eqref{eq:lpdp_root_score}.
Rolling this one-edit re-solving step out generates the full sequence.

\begin{algorithm}[h]
\caption{Local Perturbation Discrete Programming (LPDP)}
\label{alg:lpdp_compact}
\begin{algorithmic}[1]
\REQUIRE Current sequence \(x_t\); root-band width \(\delta\); root cap \(K_{\mathrm{root}}\); local radius \(r\); local cap \(K_{\mathrm{loc}}\); horizon \(H\); correction damping \(\lambda\); candidate rule \(\rho\in\{\mathrm{mixed},\mathrm{after},\mathrm{first}\}\); backup \(\oplus\in\{\mathrm{lse},\max\}\).
\STATE Evaluate \(q_t(x_t,a)\) for all \(a\in\mathcal A(x_t)\) using Eq.~\eqref{eq:root_score}.
\STATE Construct the root band \(\mathcal B_{\delta,K_{\mathrm{root}}}(x_t)\) using Eq.~\eqref{eq:root_band}.
\FOR{each \(a\in\mathcal B_{\delta,K_{\mathrm{root}}}(x_t)\)}
    \STATE Set \(y_a=f(x_t,a)\).
    \STATE Construct the local candidate graph rooted at \(y_a\) using Eq.~\eqref{eq:local_neighborhood} and the candidate rule \(\rho\) from Eqs.~\eqref{eq:c_none}--\eqref{eq:c_first}.
    \STATE Compute \(V_{H-1}^{\oplus}(y_a,a)\) using Eq.~\eqref{eq:max_backup} if \(\oplus=\max\) and Eq.~\eqref{eq:lse_backup} if \(\oplus=\mathrm{lse}\).
    \STATE Compute \(S_{\mathrm{LPDP}}(x_t,a)\) using Eq.~\eqref{eq:lpdp_root_score}.
\ENDFOR
\STATE \textbf{return}
\[
\operatorname*{arg\,max}_{a \in \mathcal{B}_{\delta,K_{\mathrm{root}}}(x_t)} S_{\mathrm{LPDP}}(x_t, a).
\]
\end{algorithmic}
\end{algorithm}
\vspace{-0.2em}
Across tasks, LPDP uses the same root re-solving recursion, candidate rules, and Max/LSE backups; task-specific choices enter only through the conditioning context, the frozen reward oracle, and the rollout window in which guidance is applied.

\section{Experiments}
\label{sec:experiments}

\noindent\textbf{Reward models and baselines.}
We evaluate LPDP on two reward-guided DNA edit-flow tasks with frozen generators and reward models.
Enhancer optimization uses a pretrained gReLU oracle~\citep{lal2025grelu,wang2025drakes} to optimize predicted HepG2 activity, while exon--intron--exon splice inpainting uses a frozen SpliceAI reward model~\citep{jaganathan2019spliceai}.
These tasks test complementary reward regimes: global regulatory sequence patterns for enhancers and local donor--acceptor boundary formation for splicing.
All main benchmarks use 500 generated samples per method and a 256-step base edit-flow schedule.
Guidance is applied during the first 16 rollout steps for enhancer design and during the last 16 steps for splice inpainting, where coarse intron context and candidate splice-boundary neighborhoods have already formed.
Baseline parameters are chosen for broadly comparable effective search depth and uncached oracle-call scale; exact settings are given in Appendix~\ref{app:reproducibility}. Appendix~\ref{app:appendix_suites} reports sensitivity ablations over guidance windows, \(\lambda\), and \(H\), supporting the defaults as cost-controlled operating points.

\textbf{Metrics.}
For enhancer optimization, the primary metric is predicted HepG2 activity; 
we additionally report 3-mer JSD (k-mer drift), JASPAR motif-profile 
correlation, and base trajectory log-likelihood (base-flow compatibility). 
The UMAP panel in Figure~\ref{fig:enhancer_main} is qualitative.
For splice inpainting, we report Splice-Geomean (geometric mean of donor 
and acceptor scores), Splice-Min (weaker of the two), Donor GT Rate 
(preservation of the canonical GT donor motif), and base trajectory 
log-likelihood; full definitions are in Appendix~\ref{app:metric_definitions}.
Calls/sample counts uncached oracle evaluations per sample and serves 
as a cost descriptor; see Appendix~\ref{app:reproducibility} for details.

\noindent\textbf{Enhancer optimization.}
The enhancer dataset consists of GRCh38-derived variable-length sequences of 200--400 bp. 
Each sequence is evaluated by five 200-bp windows, and the task label is the mean of the top two HepG2 oracle predictions across windows. 
We compare raw base sampling, CEM, beam search, TDS, SMC, and LPDP variants under the same 256-step schedule with a 16-step guided window.

\vspace{-8pt}
\begin{table}[h]
\centering
\small
\caption{Enhancer main results. \textbf{Bold}: best, and \underline{underline}: second-best result.}
\label{tab:enhancer_main_all}
\resizebox{\linewidth}{!}{
\begin{tabular}{lccccc}
\toprule
Method & Pred-Activity $\uparrow$ & 3-mer JSD $\downarrow$ & JASPAR Corr $\uparrow$ & Base traj.-LL $\uparrow$ & Calls/sample \\
\midrule
Raw base & 1.280 & 0.02234 & 0.9886 & $-$7.831 & 0 \\
\midrule
CEM  & 6.301 & 0.02516 & 0.9867 & \underline{$-$7.452} & 1,141 \\
Beam & 7.116 & 0.02362 & 0.9863 & \textbf{$-$7.385} & 40,388 \\
TDS  & 7.082 & 0.02386 & 0.9860 & $-$7.463 & 39,822 \\
SMC  & 5.333 & 0.02486 & 0.9865 & $-$7.534 & 40,214 \\
\midrule
LPDP-Mixed-Max         & 7.803 & 0.02332 & 0.9846 & $-$7.798 & 39,761 \\
LPDP-Mixed-LSE         & 7.813 & 0.02366 & 0.9845 & $-$7.803 & 39,769 \\
LPDP-ST-after-Max      & \underline{7.814} & \underline{0.02278} & \underline{0.9872} & $-$7.853 & {39,652} \\
LPDP-ST-after-LSE      & \textbf{7.837} & \textbf{0.02253} & \textbf{0.9873} & $-$7.874 & {39,693} \\
LPDP-ST-first-Max      & 7.727 & 0.02387 & 0.9856 & $-$7.813 & 39,726 \\
LPDP-ST-first-LSE      & 7.723 & 0.02425 & 0.9864 & $-$7.823 & 39,801 \\
\bottomrule
\end{tabular}
}
\end{table}

\begin{figure*}[b]
    \centering
    \includegraphics[width=\textwidth]{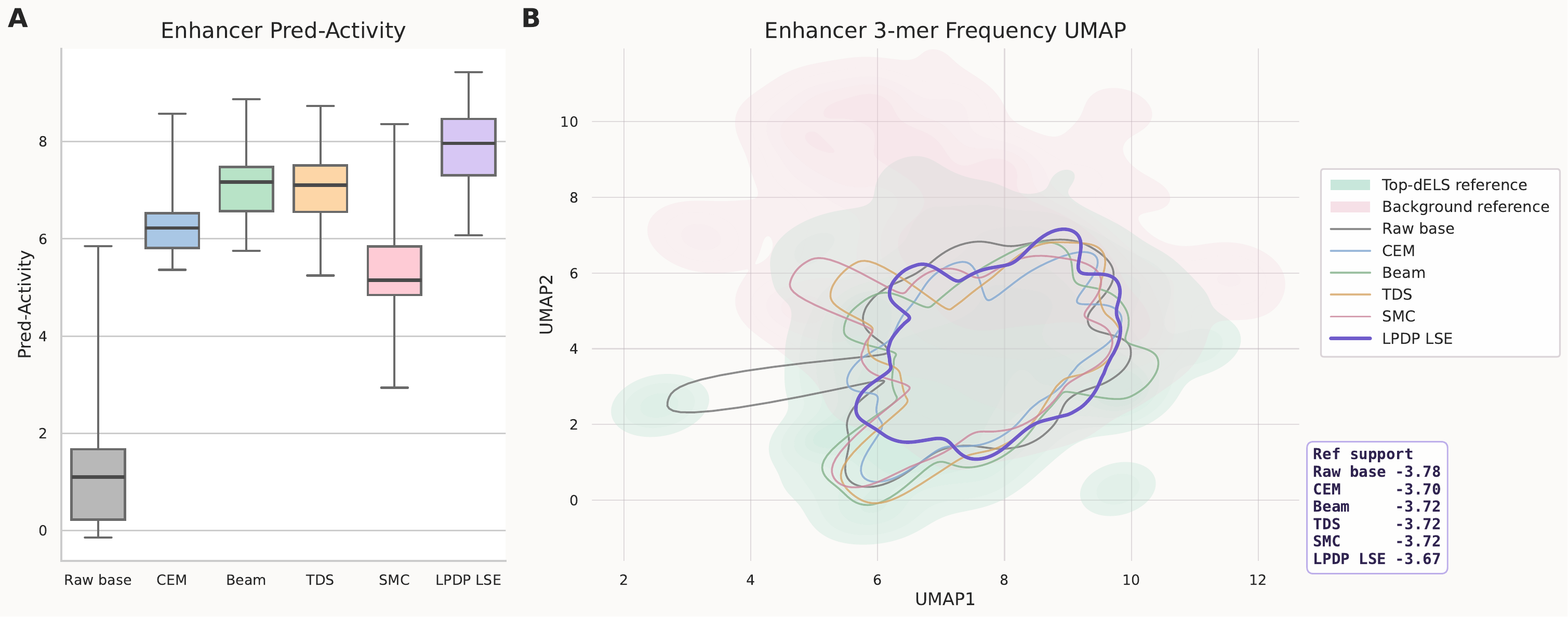}
    \caption{\textbf{Enhancer activity and 3-mer support.}
    (A) Predicted HepG2 activity distributions for generated enhancer sequences. LPDP-ST-after-LSE shifts the distribution toward higher activity. 
    (B) Qualitative 3-mer UMAP visualization. Reference points correspond to top-dELS and background sequences; method contours show generated samples.}
    \label{fig:enhancer_main}
\end{figure*}

LPDP-ST-after-LSE gives the highest predicted HepG2 activity and the lowest 3-mer JSD among the compared guided samplers, while using an oracle-call scale comparable to beam, TDS, and SMC. 
The distribution in Figure~\ref{fig:enhancer_main} shows that the activity gain is broad rather than driven by a few outliers. 
The quantitative 3-mer JSD indicates that the gain is not accompanied by large global k-mer drift; the UMAP panel is included only as a qualitative support visualization.
The comparison between Mixed and same-type variants shows that typed local candidate rules change the search bias without increasing the cost scale.

\begin{wrapfigure}{r}{0.35\columnwidth}
    \centering
    \vspace{-10pt}
    \includegraphics[width=\linewidth]{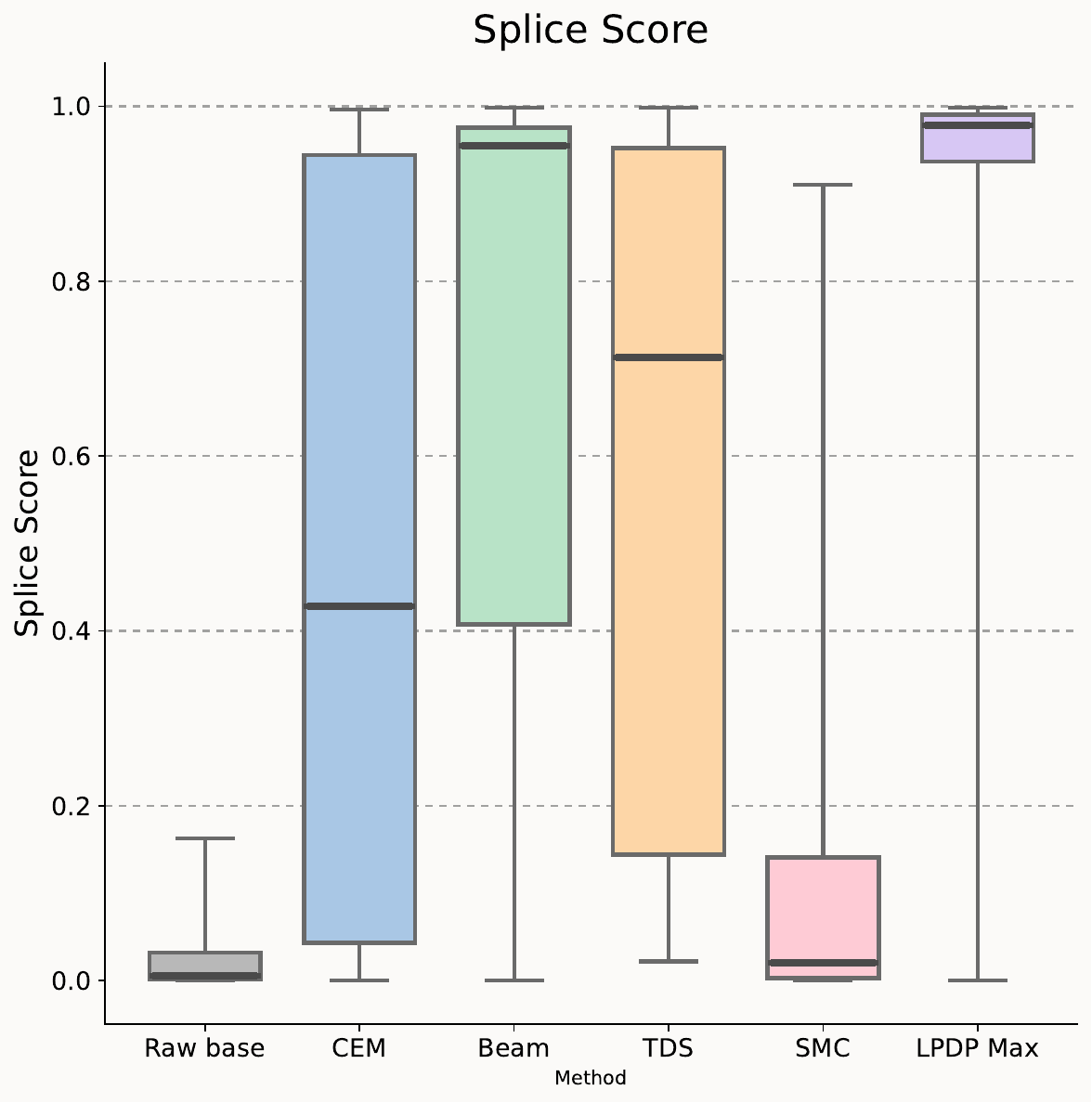}
    \vspace{-10pt}
    \caption{\textbf{Splice inpainting score distributions.} The box plot shows the distribution of intended-junction splice geomean scores.}
    \vspace{-10pt}
    \label{fig:splice_scores}
\end{wrapfigure}

\noindent\textbf{Splice inpainting.}
For splice inpainting, the model receives two exon contexts and generates the intervening intron. 
We apply LPDP over the late rollout window, where the intron context has partially formed and donor/acceptor boundary candidates are present, making the splice reward more actionable. 
LPDP uses the same root re-solving procedure as in enhancer design; only the conditioning context and frozen reward oracle change. 
Dataset construction and splice metrics are described in Appendix~\ref{app:splice_data} and Appendix~\ref{app:splice_metrics}.

The splice benchmark shows the same LPDP mechanism in a different reward geometry. 
ST-first-Max achieves the highest splice geomean, splice minimum, and donor GT rate, while ST-after-Max is the next strongest variant and has the best base trajectory likelihood. 
Together with the enhancer results, this pattern supports the interpretation of ST-after and ST-first as complementary typed candidate rules.
ST-after is a conservative base-faithful approximation that pairs naturally with LSE when multiple plausible regulatory continuations matter, whereas ST-first defines a stronger typed subproblem that can be advantageous for late-stage splice repair with a more committed local edit mode.

\begin{table}[h]
\centering
\small
\caption{Splice inpainting main results. \textbf{Bold}: best, and \underline{underline}: second-best result.}
\label{tab:splice_main}
\resizebox{\linewidth}{!}{
\begin{tabular}{lccccc}
\toprule
Method & Splice-Geomean $\uparrow$ & Splice-Min $\uparrow$ & Donor GT Rate $\uparrow$ & Base traj.-LL $\uparrow$ & Calls/sample \\
\midrule
Raw base & 0.0204 & 0.0060 & 0.344 & $-5.82$ & 0 \\
\midrule
CEM      & 0.4810 & 0.4395 & 0.719 & $-6.12$ & 21,805 \\
Beam     & 0.7100 & 0.6701 & 0.813 & $-6.11$ & 20,013 \\
TDS      & 0.5686 & 0.5058 & 0.719 & $-6.17$ & 21746 \\
SMC      & 0.1274 & 0.0728 & 0.500 & $-6.52$ & 24,211 \\
\midrule
LPDP-Mixed-Max     & 0.8227 & 0.7978 & 0.814 & $-6.01$ & 20,642 \\
LPDP-Mixed-LSE     & 0.8366 & 0.8017 & 0.813 & $-5.98$ & 22,088 \\
LPDP-ST-after-Max  & \underline{0.8664} & \underline{0.8348} & \underline{0.843} & \textbf{$-$5.73} & 20,997 \\
LPDP-ST-after-LSE  & 0.8385 & 0.8079 & 0.750 & $-5.93$ & 21,825 \\
LPDP-ST-first-Max  & \textbf{0.8713} & \textbf{0.8400} & \textbf{0.881} & \underline{$-5.76$} & 21,222 \\
LPDP-ST-first-LSE  & 0.8484 & 0.8216 & 0.823 & $-5.91$ & 22,246 \\
\bottomrule
\end{tabular}
}
\end{table}
\section{Conclusion}
\label{sec:conclusion}
LPDP provides inference-time reward tilting for variable-length DNA edit flows by re-ranking root edits with bounded local dynamic programs.
The method uses the structure exposed by edit flows---site, type, and token---to allocate uncached oracle computation to locally plausible futures around high-scoring roots.
In enhancer optimization, LPDP improves predicted activity under comparable search budgets while preserving base-flow and sequence-distribution diagnostics.
In splice inpainting, the same root-scoring and local re-solving core improves intended-junction quality under a different conditioning context and reward oracle.
Overall, the typed-rule results support viewing Mixed, ST-after, and ST-first as a candidate-rule family rather than task-specific heuristics.

LPDP remains limited by its root-band truncation, bounded site-local graph, and shallow lookahead, which keep inference tractable but may miss long-range or multi-site edit interactions.
Because all rewards are frozen predictive oracles, the reported biological improvements are model-based and require downstream experimental validation.

\medskip
{\small
\bibliographystyle{plainnat}
\bibliography{reference}
}

\newpage
\appendix
\textbf{\LARGE Appendices}

The appendix provides formal statements for LPDP, diagnostic definitions for same-type candidate rules, dataset and metric details, additional ablations, and reproducibility settings. We keep notation consistent with the main text.

\section{Proofs}
\label{app:proofs}

\subsection{Proof of Lemma~\ref{lem:root_band}}
\label{app:proof_root_band}

\begin{proof}
By definition,
\begin{equation}
    \mathcal B_\delta(x)
    =
    \{a:q_t(x,a)\ge q_t^\star(x)-\delta\}.
\end{equation}
Therefore, if $a\notin\mathcal B_\delta(x)$, then
\begin{equation}
    q_t(x,a)<q_t^\star(x)-\delta.
\end{equation}
Thus any action outside the uncapped root band is at least $\delta$ worse than the exact one-step optimum in immediate root score.  This proves the claim.  The top-$K_{\mathrm{root}}$ cap used in Eq.~\eqref{eq:root_band} is a separate computational cap applied after forming the band.
\end{proof}

\subsection{Theorem~\ref{thm:path_partition}: LPDP-LSE as a local path-space partition function}
\label{app:proof_partition}

\begin{theorem}[LPDP-LSE is a local path-space partition function]
\label{thm:path_partition}
Assume \(\gamma=1\). 
For any local state \(z\), previous edit \(a_{\mathrm{prev}}\), and horizon \(h\), let 
\(\mathcal T_h(z,a_{\mathrm{prev}})\) be the set of all length-\(h\) local edit paths obtained by repeatedly choosing continuation edits from the local candidate rule \(\mathcal C\). 
A path is written as
\[
\xi=(b_0,\ldots,b_{h-1}),
\]
with state sequence \(z_0=z\) and \(z_{i+1}=f(z_i,b_i)\). 
Define its cumulative local score by
\[
G_h(\xi;z,a_{\mathrm{prev}})
=
\sum_{i=0}^{h-1} q(z_i,b_i).
\]
Then the LSE value satisfies
\begin{equation}
\label{eq:partition_theorem}
V_h^{\mathrm{lse}}(z,a_{\mathrm{prev}})
=
\tau\log
\sum_{\xi\in\mathcal T_h(z,a_{\mathrm{prev}})}
\exp\!\left(
\frac{G_h(\xi;z,a_{\mathrm{prev}})}{\tau}
\right).
\end{equation}
\end{theorem}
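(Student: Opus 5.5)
We prove the identity by induction on the horizon \(h\), for all pairs \((z,a_{\mathrm{prev}})\) simultaneously. The key point is that the recursion in Eq.~\eqref{eq:lse_backup} with \(\gamma=1\) is a sum-product recursion in the exponentiated domain. If we write \(Z_h(z,a_{\mathrm{prev}})=\exp(V_h^{\mathrm{lse}}(z,a_{\mathrm{prev}})/\tau)\), the backup becomes
\[
Z_h(z,a_{\mathrm{prev}})=\sum_{b\in\mathcal C(z,a_{\mathrm{prev}})} \exp\!\left(\frac{q(z,b)}{\tau}\right) Z_{h-1}(f(z,b),b).
\]
The claim is then that \(Z_h\) equals the sum over paths of \(\exp(G_h/\tau)\).

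For the base case \(h=0\), we have \(V_0^{\mathrm{lse}}=0\), so \(Z_0=1\). The set \(\mathcal T_0(z,a_{\mathrm{prev}})\) consists of the single empty path, with \(G_0=0\). The right-hand side of Eq.~\eqref{eq:partition_theorem} is therefore \(\tau\log 1=0\), and the case holds.

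For the inductive step, assume the identity at horizon \(h-1\) for every state and previous edit. We use the natural bijection between \(\mathcal T_h(z,a_{\mathrm{prev}})\) and the disjoint union over \(b\in\mathcal C(z,a_{\mathrm{prev}})\) of \(\{b\}\times\mathcal T_{h-1}(f(z,b),b)\). This bijection splits a path \(\xi=(b_0,\ldots,b_{h-1})\) into its first edit \(b_0\) and its tail.

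The bijection holds because \(\mathcal T_h\) is defined by repeatedly applying the candidate rule. At step \(i\) the candidate set is \(\mathcal C(z_i,b_{i-1})\), with \(b_{-1}=a_{\mathrm{prev}}\). So the tail is exactly a length-\((h-1)\) path from \(z_1=f(z,b_0)\) whose previous edit is \(b_0\).

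Under this split the cumulative score is additive:
\[
G_h(\xi;z,a_{\mathrm{prev}})=q(z,b_0)+G_{h-1}(\xi_{\mathrm{tail}};f(z,b_0),b_0).
\]
Substituting the inductive hypothesis into the exponentiated recursion and distributing \(\exp(q(z,b)/\tau)\) over the inner sum gives
\[
Z_h=\sum_{\xi\in\mathcal T_h}\exp\!\left(\frac{G_h(\xi)}{\tau}\right).
\]
Taking \(\tau\log\) of both sides yields Eq.~\eqref{eq:partition_theorem}.

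The main obstacle is bookkeeping rather than analysis, and it has three parts.

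\emph{Previous-edit dependence.} The path set must be defined so that the previous edit is carried along, making the decomposition an exact bijection. Different paths may reach the same sequence \(z\), so paths must be counted as edit sequences, not as states. This is why the theorem is stated over paths \(\xi\) rather than nodes.

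\emph{Time index of \(q\).} The score \(q\) in the recursion may depend on the depth index (\(q_{t+i}\) in Eq.~\eqref{eq:local_q}). The induction is then stated with the depth offset as an extra argument, and it goes through unchanged.

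\emph{Degenerate candidate sets.} We note that \(\mathcal C\) is nonempty by the fallback rules whenever \(\mathcal N_r\) is nonempty. If some \(\mathcal C\) is empty, both sides equal \(\tau\log 0=-\infty\) under the usual convention, so the identity still holds.
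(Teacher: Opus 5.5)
Your proposal is correct and follows the paper's proof essentially step for step: induction on $h$ over all $(z,a_{\mathrm{prev}})$, the empty-path base case, the bijection $\xi=(b,\xi')$ with $\xi'\in\mathcal T_{h-1}(f(z,b),b)$, additivity of $G_h$, and factoring $\exp(q(z,b)/\tau)$ out of the inner sum. The differences are only cosmetic. You define $Z_h$ as $\exp(V_h^{\mathrm{lse}}/\tau)$ and show it equals the path sum, whereas the paper defines $Z_h$ as the path sum and shows $\tau\log Z_h$ obeys the backup. Your remarks on depth-indexed $q$ and empty candidate sets (with $\tau\log 0=-\infty$) are valid refinements that the paper leaves implicit.
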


\begin{proof}
We prove the claim by induction on the local planning horizon \(h\). 
For compactness, define the local path partition function
\begin{equation}
\label{eq:local_partition_appendix}
Z_h(z,a_{\mathrm{prev}})
:=
\sum_{\xi\in\mathcal T_h(z,a_{\mathrm{prev}})}
\exp\!\left(
\frac{G_h(\xi;z,a_{\mathrm{prev}})}{\tau}
\right).
\end{equation}
The theorem is equivalent to showing that
\[
V_h^{\mathrm{lse}}(z,a_{\mathrm{prev}})
=
\tau \log Z_h(z,a_{\mathrm{prev}})
\]
for every \(h\).

\paragraph{Base case.}
When \(h=0\), the path set \(\mathcal T_0(z,a_{\mathrm{prev}})\) contains only the empty path. 
This path applies no continuation edits, so its cumulative score is \(G_0=0\). 
Therefore
\[
Z_0(z,a_{\mathrm{prev}})
=
\exp(0/\tau)
=
1.
\]
Taking \(\tau\log\) gives
\[
\tau\log Z_0(z,a_{\mathrm{prev}})
=
\tau\log 1
=
0.
\]
This matches the terminal condition of the LSE recursion,
\[
V_0^{\mathrm{lse}}(z,a_{\mathrm{prev}})=0.
\]
Thus the claim holds for \(h=0\).

\paragraph{Inductive step.}
Assume the claim holds for horizon \(h-1\), for every possible local state \(z'\) and previous edit \(a'\):
\[
V_{h-1}^{\mathrm{lse}}(z',a')
=
\tau\log Z_{h-1}(z',a').
\]
We prove the statement for horizon \(h\).

Consider any path \(\xi\in\mathcal T_h(z,a_{\mathrm{prev}})\). 
By definition, its first edit must be some
\[
b\in\mathcal C(z,a_{\mathrm{prev}}).
\]
After applying \(b\), the next local state is \(f(z,b)\), and the previous edit for the suffix becomes \(b\). 
The remaining \(h-1\) edits therefore form a suffix path
\[
\xi'\in\mathcal T_{h-1}(f(z,b),b).
\]
Thus every length-\(h\) path can be uniquely decomposed as
\[
\xi=(b,\xi'),
\]
where \(b\in\mathcal C(z,a_{\mathrm{prev}})\) and 
\(\xi'\in\mathcal T_{h-1}(f(z,b),b)\). 
Conversely, every such pair \((b,\xi')\) defines a valid length-\(h\) path. 
This one-to-one decomposition lets us rewrite the sum over paths as a sum over first edits and suffix paths.

Since \(\gamma=1\), the cumulative score decomposes additively:
\begin{equation}
\label{eq:path_score_decomp_appendix}
G_h(\xi;z,a_{\mathrm{prev}})
=
q(z,b)
+
G_{h-1}(\xi';f(z,b),b).
\end{equation}
Substituting this decomposition into the partition function gives
\begin{align}
Z_h(z,a_{\mathrm{prev}})
&=
\sum_{\xi\in\mathcal T_h(z,a_{\mathrm{prev}})}
\exp\!\left(
\frac{G_h(\xi;z,a_{\mathrm{prev}})}{\tau}
\right)\\
&=
\sum_{b\in\mathcal C(z,a_{\mathrm{prev}})}
\sum_{\xi'\in\mathcal T_{h-1}(f(z,b),b)}
\exp\!\left(
\frac{
q(z,b)+G_{h-1}(\xi';f(z,b),b)
}{\tau}
\right)\\
&=
\sum_{b\in\mathcal C(z,a_{\mathrm{prev}})}
\exp\!\left(\frac{q(z,b)}{\tau}\right)
\sum_{\xi'\in\mathcal T_{h-1}(f(z,b),b)}
\exp\!\left(
\frac{G_{h-1}(\xi';f(z,b),b)}{\tau}
\right).
\end{align}
The inner sum is precisely the suffix partition function
\[
Z_{h-1}(f(z,b),b).
\]
By the induction hypothesis,
\[
Z_{h-1}(f(z,b),b)
=
\exp\!\left(
\frac{V_{h-1}^{\mathrm{lse}}(f(z,b),b)}{\tau}
\right).
\]
Therefore
\begin{align}
Z_h(z,a_{\mathrm{prev}})
&=
\sum_{b\in\mathcal C(z,a_{\mathrm{prev}})}
\exp\!\left(\frac{q(z,b)}{\tau}\right)
\exp\!\left(
\frac{V_{h-1}^{\mathrm{lse}}(f(z,b),b)}{\tau}
\right)\\
&=
\sum_{b\in\mathcal C(z,a_{\mathrm{prev}})}
\exp\!\left(
\frac{
q(z,b)+V_{h-1}^{\mathrm{lse}}(f(z,b),b)
}{\tau}
\right).
\end{align}
Taking \(\tau\log\) on both sides yields
\begin{align}
\tau\log Z_h(z,a_{\mathrm{prev}})
&=
\tau\log
\sum_{b\in\mathcal C(z,a_{\mathrm{prev}})}
\exp\!\left(
\frac{
q(z,b)+V_{h-1}^{\mathrm{lse}}(f(z,b),b)
}{\tau}
\right)\\
&=
V_h^{\mathrm{lse}}(z,a_{\mathrm{prev}}),
\end{align}
where the last equality is exactly the LSE backup recursion in Eq.~\eqref{eq:lse_backup} with \(\gamma=1\). 
This completes the induction.
\end{proof}

\paragraph{Remark on discounted lookahead.}
The theorem above states the exact path-partition interpretation for \(\gamma=1\). 
When \(\gamma<1\), Eq.~\eqref{eq:lse_backup} is still a valid soft Bellman recursion, but it is no longer equal to a simple log-partition over additively discounted path scores with a fixed temperature. 
The reason is that the recursive term appears as
\[
\gamma V_{h-1}^{\mathrm{lse}}(f(z,b),b),
\]
and exponentiating it gives
\[
\exp\!\left(
\frac{\gamma V_{h-1}^{\mathrm{lse}}(f(z,b),b)}{\tau}
\right)
=
\left[
\exp\!\left(
\frac{V_{h-1}^{\mathrm{lse}}(f(z,b),b)}{\tau}
\right)
\right]^\gamma.
\]
Thus the suffix partition function is raised to the power \(\gamma\), rather than simply contributing an additive discounted path score inside one global partition function. 
Equivalently, \(\gamma<1\) can be viewed as a heuristic local discount or risk/temperature modulation of future path mass. 
In our use of LPDP, the partition-function statement is therefore used to interpret the undiscounted LSE backup, while the discounted case is treated as the corresponding soft dynamic-programming variant with attenuated future correction.

\subsection{Theorem~\ref{thm:max_limit}: LPDP-Max as the low-temperature limit}
\label{app:proof_max_limit}

\begin{theorem}[LPDP-Max is the low-temperature limit]
\label{thm:max_limit}
Assume all local candidate sets are finite and nonempty along the depth-\(h\) local graph. 
For any fixed horizon \(h\), local state \(z\), and previous edit \(a_{\mathrm{prev}}\),
\[
\lim_{\tau\to 0^+}
V_h^{\mathrm{lse}}(z,a_{\mathrm{prev}})
=
V_h^{\max}(z,a_{\mathrm{prev}}).
\]
Moreover, when \(\gamma=1\),
\[
0\le 
V_h^{\mathrm{lse}}(z,a_{\mathrm{prev}})
-
V_h^{\max}(z,a_{\mathrm{prev}})
\le
\tau\log|\mathcal T_h(z,a_{\mathrm{prev}})|.
\]
\end{theorem}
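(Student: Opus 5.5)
The plan is to treat the two claims separately. The limit holds for general \(\gamma\in[0,1]\) and follows by induction on \(h\). The quantitative sandwich bound for \(\gamma=1\) follows from Theorem~\ref{thm:path_partition} together with the elementary log-sum-exp inequality. The basic tool throughout is this: for finitely many reals \(c_1,\dots,c_m\),
\[
\max_j c_j \;\le\; \tau\log\sum_{j=1}^m e^{c_j/\tau} \;\le\; \max_j c_j+\tau\log m .
\]
The lower bound keeps only the largest term. The upper bound bounds each term by the largest one. Letting \(\tau\to0^+\) then gives \(\tau\log\sum_j e^{c_j/\tau}\to\max_j c_j\), since \(\tau\log m\to0\).

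\emph{Limit.} I would induct on \(h\), with \(z\) and \(a_{\mathrm{prev}}\) arbitrary. The base case is trivial, because \(V_0^{\mathrm{lse}}=V_0^{\max}=0\).

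For the inductive step, note first that the local graph depends only on \(\mathcal C\), \(f\), and \(q\), none of which involve \(\tau\). So \(V_h^{\max}\) is independent of \(\tau\), while \(V_{h-1}^{\mathrm{lse}}\) depends on \(\tau\). Write
\[
c_b(\tau)=q(z,b)+\gamma V_{h-1}^{\mathrm{lse}}(f(z,b),b), \qquad c_b^0=q(z,b)+\gamma V_{h-1}^{\max}(f(z,b),b).
\]
By the induction hypothesis, \(c_b(\tau)\to c_b^0\) for each of the finitely many \(b\in\mathcal C(z,a_{\mathrm{prev}})\). The candidate set is nonempty, which is needed for the lower bound. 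With \(m=|\mathcal C(z,a_{\mathrm{prev}})|\), the sandwich gives
\[
\max_b c_b(\tau)\le V_h^{\mathrm{lse}}(z,a_{\mathrm{prev}})\le \max_b c_b(\tau)+\tau\log m .
\]
The maximum of finitely many functions is continuous in its arguments, so \(\max_b c_b(\tau)\to\max_b c_b^0=V_h^{\max}(z,a_{\mathrm{prev}})\). Squeezing then gives the limit.

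This is the step that needs the most care. The terms inside the log-sum-exp themselves move with \(\tau\), so one cannot simply quote the static limit. The two-sided sandwich with \(\tau\)-dependent \(c_b\) handles this cleanly.

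\emph{Bound for \(\gamma=1\).} By Theorem~\ref{thm:path_partition},
\[
V_h^{\mathrm{lse}}=\tau\log\sum_{\xi\in\mathcal T_h}e^{G_h(\xi)/\tau}.
\]
I would next show by a parallel induction that \(V_h^{\max}(z,a_{\mathrm{prev}})=\max_{\xi\in\mathcal T_h(z,a_{\mathrm{prev}})}G_h(\xi)\). This uses the same unique decomposition \(\xi=(b,\xi')\) and the additive split \(G_h=q(z,b)+G_{h-1}(\xi')\) from the proof of Theorem~\ref{thm:path_partition}, with the sum replaced by a max:
\[
\max_{b}\max_{\xi'}\bigl[q+G_{h-1}\bigr]=\max_b\bigl[q+V_{h-1}^{\max}\bigr].
\]
Since each candidate set is finite and nonempty, \(\mathcal T_h\) is finite and nonempty. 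Applying the sandwich inequality with \(c_\xi=G_h(\xi)\) and \(m=|\mathcal T_h(z,a_{\mathrm{prev}})|\) gives
\[
0\le V_h^{\mathrm{lse}}-V_h^{\max}\le\tau\log|\mathcal T_h(z,a_{\mathrm{prev}})| .
\]
This also re-proves the limit in the case \(\gamma=1\).
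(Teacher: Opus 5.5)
Your proposal is correct and follows the paper's proof essentially step for step: the limit uses the same two-sided log-sum-exp sandwich applied inductively with the \(\tau\)-dependent terms \(q(z,b)+\gamma V_{h-1}^{\mathrm{lse}}(f(z,b),b)\), and the \(\gamma=1\) gap uses Theorem~\ref{thm:path_partition} plus the sandwich over path scores. Your parallel induction showing \(V_h^{\max}(z,a_{\mathrm{prev}})=\max_{\xi\in\mathcal T_h(z,a_{\mathrm{prev}})}G_h(\xi)\) justifies a step the paper only asserts, so it tightens the argument without changing the route.
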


\begin{proof}
We use the elementary finite-dimensional bound: for any finite vector \(u=(u_1,\ldots,u_m)\),
\begin{equation}
\label{eq:lse_bound}
    \max_i u_i
    \le
    \tau\log\sum_{i=1}^m \exp(u_i/\tau)
    \le
    \max_i u_i+\tau\log m.
\end{equation}
In particular, for fixed \(u\), the middle term converges to \(\max_i u_i\) as \(\tau\to0^+\).

We first prove the low-temperature limit by induction on \(h\). 
For \(h=0\), both \(V_0^{\mathrm{lse}}\) and \(V_0^{\max}\) are zero, so the claim is immediate. 
Assume the claim holds for horizon \(h-1\). 
For each candidate \(b\in\mathcal C(z,a_{\mathrm{prev}})\), define
\[
u_b(\tau)
=
q(z,b)+\gamma V_{h-1}^{\mathrm{lse}}(f(z,b),b).
\]
By the induction hypothesis,
\[
u_b(\tau)
\to
u_b^0
:=
q(z,b)+\gamma V_{h-1}^{\max}(f(z,b),b)
\qquad
\text{as }\tau\to0^+.
\]
Because the candidate set is finite,
\[
\max_{b} u_b(\tau)\to \max_b u_b^0.
\]
Applying Eq.~\eqref{eq:lse_bound} to the finite vector \(\{u_b(\tau)\}_{b\in\mathcal C(z,a_{\mathrm{prev}})}\), we obtain
\[
\max_b u_b(\tau)
\le
V_h^{\mathrm{lse}}(z,a_{\mathrm{prev}})
\le
\max_b u_b(\tau)
+
\tau\log|\mathcal C(z,a_{\mathrm{prev}})|.
\]
Taking \(\tau\to0^+\) gives
\[
\lim_{\tau\to0^+}
V_h^{\mathrm{lse}}(z,a_{\mathrm{prev}})
=
\max_{b\in\mathcal C(z,a_{\mathrm{prev}})}
\left[
q(z,b)+\gamma V_{h-1}^{\max}(f(z,b),b)
\right]
=
V_h^{\max}(z,a_{\mathrm{prev}}).
\]
This proves the first claim.

For the finite-gap bound, assume \(\gamma=1\). 
By Theorem~\ref{thm:path_partition}, the LSE value can be written as the log-partition function over local path scores:
\[
V_h^{\mathrm{lse}}(z,a_{\mathrm{prev}})
=
\tau\log
\sum_{\xi\in\mathcal T_h(z,a_{\mathrm{prev}})}
\exp(G_h(\xi;z,a_{\mathrm{prev}})/\tau).
\]
Under the same unrestricted-temperature path view, the Max value is the maximum path score:
\[
V_h^{\max}(z,a_{\mathrm{prev}})
=
\max_{\xi\in\mathcal T_h(z,a_{\mathrm{prev}})}
G_h(\xi;z,a_{\mathrm{prev}}).
\]
Applying Eq.~\eqref{eq:lse_bound} to the finite vector of path scores
\(\{G_h(\xi;z,a_{\mathrm{prev}}):\xi\in\mathcal T_h(z,a_{\mathrm{prev}})\}\) yields
\[
0
\le
V_h^{\mathrm{lse}}(z,a_{\mathrm{prev}})
-
V_h^{\max}(z,a_{\mathrm{prev}})
\le
\tau\log|\mathcal T_h(z,a_{\mathrm{prev}})|.
\]
This proves the gap bound.
\end{proof}

\subsection{Proof of Proposition~\ref{prop:after_topk}}
\label{app:proof_after_topk}

The ST-after set \(\mathcal C_{\mathrm{after}}\) is a subset of \(\widetilde{\mathcal C}\) by definition. The set \(\widetilde{\mathcal C}\) is exactly the top-\(K_{\mathrm{loc}}\) set under \(p_0(b\mid z,t)\) among all actions in \(\mathcal N_r(z,a_{\mathrm{prev}})\). Therefore every element of \(\mathcal C_{\mathrm{after}}\) has mixed-type local rank at most \(K_{\mathrm{loc}}\). In contrast, ST-first applies top-\(K\) only after restricting to the same-type subset, so an action may be high-ranked within the same-type subset while having mixed-type rank greater than \(K_{\mathrm{loc}}\). ST-first is therefore the more aggressive typed rule: it can bring in candidates whose mixed-type rank lies outside the base shortlist.

\subsection{Same-type diagnostics at the 256/16 operating point}
\label{app:sametype_diagnostics}

We diagnose same-type restrictions as typed approximations to mixed local DP. The goal is to measure whether a typed local graph can reduce the local search space while preserving the root decisions and base-shortlist faithfulness of the mixed graph.

\paragraph{Branch reduction.}
Let \(\mathcal C_{\mathrm{mixed}}=\mathcal C_{\mathrm{none}}\) and let \(\mathcal C_\rho\) be the candidate set induced by a typed rule \(\rho\in\{\mathrm{after},\mathrm{first}\}\). We report the relative candidate size
\begin{equation}
    \mathrm{CandRatio}_{\rho}
    =
    \frac{|\mathcal C_\rho(z,a_{\mathrm{prev}})|}{|\mathcal C_{\mathrm{mixed}}(z,a_{\mathrm{prev}})|},
\end{equation}
and the analogous enumerated local path-count ratio. Values below one indicate that the typed rule shrinks the local search space.

\paragraph{Root-decision agreement with mixed local DP.}
For a set of retained root edits, define mixed and typed LPDP scores
\begin{equation}
    S_{\mathrm{mixed}}(x_t,a)
    =
    q_t(x_t,a)+\lambda V_{H-1}^{\mathrm{mixed}}(f(x_t,a),a),
\end{equation}
\begin{equation}
    S_{\rho}(x_t,a)
    =
    q_t(x_t,a)+\lambda V_{H-1}^{\rho}(f(x_t,a),a).
\end{equation}
We report the top-1 agreement
\begin{equation}
    \mathbf 1\left[
    \arg\max_a S_{\rho}(x_t,a)
    =
    \arg\max_a S_{\mathrm{mixed}}(x_t,a)
    \right]
\end{equation}
for both Max and LSE backups. This measures whether typed local re-solving preserves the root decision of mixed local DP.

\paragraph{Mixed-rank faithfulness.}
For each candidate retained by a typed rule, we compute its rank under the mixed local base-prior ordering in \(\mathcal N_r(z,a_{\mathrm{prev}})\). The mixed-rank tail is
\begin{equation}
    \Pr\left[\operatorname{rank}_{\mathrm{mixed}}(b)>K_{\mathrm{loc}}\right].
\end{equation}
For ST-after, this value is zero by construction. For ST-first, it can be positive because type filtering happens before ranking.

\paragraph{LSE mass efficiency.}
For ST-after, the typed graph is a subset of the mixed local top-\(K\) graph. We therefore define the relative soft path-mass efficiency
\begin{equation}
    \mathrm{MassEff}_{\mathrm{after}}
    =
    \frac{Z_{\mathrm{after}}/Z_{\mathrm{mixed}}}{|\mathcal T_{\mathrm{after}}|/|\mathcal T_{\mathrm{mixed}}|}.
\end{equation}
Values above one indicate that the retained typed paths carry more LSE mass than expected from their path-count fraction. For ST-first, this ratio is a relative soft-value diagnostic rather than a subset coverage guarantee, because ST-first can introduce candidates outside the mixed top-\(K\) shortlist.

\begin{figure}[t]
    \centering
    \includegraphics[width=0.98\linewidth]{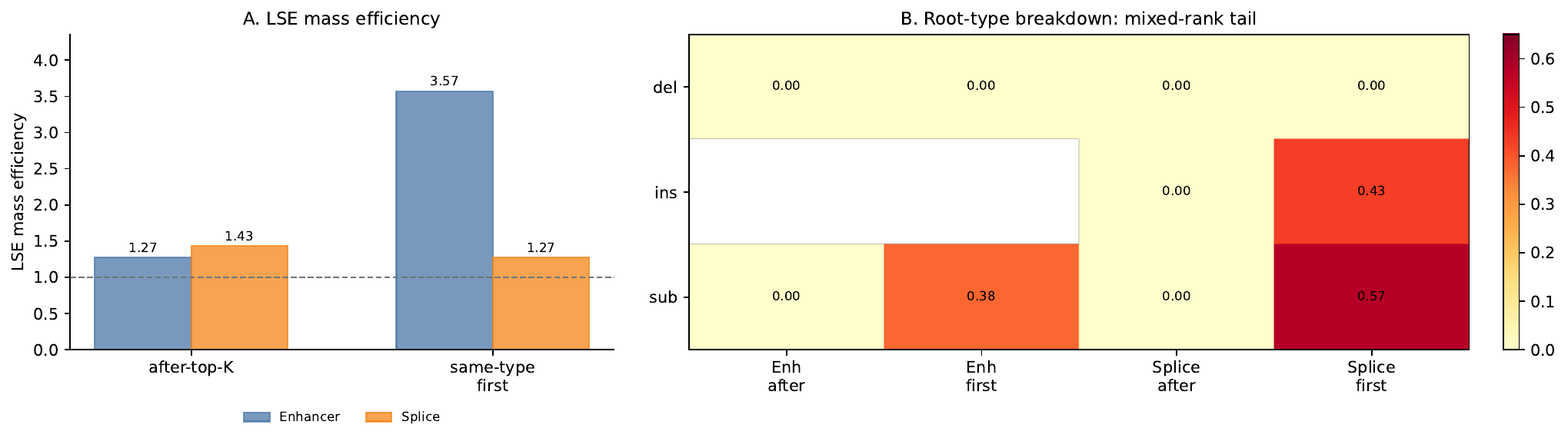}
    \caption{\textbf{Additional same-type diagnostics.}
    LSE mass efficiency measures retained soft path mass normalized by retained local path count. The mixed-rank tail heatmap shows where typed rules select candidates outside the mixed local top-\(K\) shortlist. ST-after has zero tail by construction, while ST-first can introduce lower mixed-rank candidates in a root-type-dependent way.}
    \label{fig:sametype_diagnostics}
\end{figure}

\subsection{Same-type pruning under Max and LSE}
\label{app:proof_pruning}

\begin{proposition}[Exact recovery under harmless same-type pruning]
\label{prop:sametype_recovery}
Fix a local state \(z\), previous edit \(a_{\mathrm{prev}}\), and horizon \(h\). Let \(\mathcal C_{\mathrm{mixed}}\) be a mixed-type local candidate rule and \(\mathcal C_{\mathrm{same}}\subseteq\mathcal C_{\mathrm{mixed}}\) be a same-type restricted rule at every local state. For the Max backup, suppose there exists an optimal length-\(h\) local path under \(\mathcal C_{\mathrm{mixed}}\) whose every edit remains feasible under \(\mathcal C_{\mathrm{same}}\). Then \(V_h^{\max,\mathrm{same}}(z,a_{\mathrm{prev}})=V_h^{\max,\mathrm{mixed}}(z,a_{\mathrm{prev}})\).
\end{proposition}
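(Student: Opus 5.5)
The plan is a sandwich argument. We prove $V_h^{\max,\mathrm{same}}\le V_h^{\max,\mathrm{mixed}}$ by monotonicity, and the reverse inequality by exhibiting a feasible path.

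First we need a path-space form of the Max value that holds for general $\gamma$. Unrolling Eq.~\eqref{eq:max_backup} by induction on $h$ gives
\[
V_h^{\max,\mathcal C}(z,a_{\mathrm{prev}})=\max_{\xi\in\mathcal T_h^{\mathcal C}(z,a_{\mathrm{prev}})}\sum_{i=0}^{h-1}\gamma^{i}q(z_i,b_i),
\]
where $\mathcal T_h^{\mathcal C}$ denotes the length-$h$ paths generated by rule $\mathcal C$. The induction mirrors the unique decomposition $\xi=(b,\xi')$ from the proof of Theorem~\ref{thm:path_partition}. The only new ingredient is that, since $\gamma\ge0$, the max commutes with multiplication by $\gamma$, so $\gamma\max_{\xi'}G'=\max_{\xi'}\gamma G'$. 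Nonnegativity of $\gamma$ is therefore the one place care is needed. When $\gamma=1$ this is exactly the path view already used in Theorem~\ref{thm:max_limit}.

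Next we compare path sets. Since $\mathcal C_{\mathrm{same}}(z',a')\subseteq\mathcal C_{\mathrm{mixed}}(z',a')$ at every local state, an induction on length shows $\mathcal T_h^{\mathrm{same}}(z,a_{\mathrm{prev}})\subseteq\mathcal T_h^{\mathrm{mixed}}(z,a_{\mathrm{prev}})$. At each step the pair (state, previous edit) along a given edit sequence is determined by the edits alone, so the same prefix reaches the same node under both rules. Path scores are computed identically under both rules, since $q$ and $f$ do not depend on the rule. A maximum over a subset is no larger than the maximum over the full set, so $V_h^{\max,\mathrm{same}}\le V_h^{\max,\mathrm{mixed}}$.

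For the reverse inequality, let $\xi^\star=(b_0,\dots,b_{h-1})$ be the optimal mixed path given by hypothesis. We read ``every edit remains feasible under $\mathcal C_{\mathrm{same}}$'' as $b_i\in\mathcal C_{\mathrm{same}}(z_i,b_{i-1})$, with $b_{-1}=a_{\mathrm{prev}}$. Under this reading $\xi^\star\in\mathcal T_h^{\mathrm{same}}$. Its score is $V_h^{\max,\mathrm{mixed}}$, and it is bounded by the same-rule maximum, which gives $V_h^{\max,\mathrm{same}}\ge V_h^{\max,\mathrm{mixed}}$. Combining the two inequalities gives equality.

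The main obstacle is the interpretive one: making precise that feasibility is checked at the states along $\xi^\star$ itself, with the conditioning on $a_{\mathrm{prev}}$ updated to the preceding edit. Once that is fixed, the remaining steps are routine bookkeeping. We would also note that finiteness and nonemptiness of the candidate sets, as assumed in Theorem~\ref{thm:max_limit}, guarantee that all maxima are attained.
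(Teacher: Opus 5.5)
Your proposal is correct and follows essentially the same sandwich argument as the paper. The inclusion $\mathcal C_{\mathrm{same}}\subseteq\mathcal C_{\mathrm{mixed}}$ nests the path sets, giving $V_h^{\max,\mathrm{same}}\le V_h^{\max,\mathrm{mixed}}$, and feasibility of the optimal mixed path $\xi^\star$ gives the reverse inequality. Your discounted path form $\sum_i\gamma^i q(z_i,b_i)$, justified by $\gamma\ge0$, is a useful tightening: the paper's proof equates $V_h^{\max}$ with the maximum of the undiscounted score $G_h$, which is exact only when $\gamma=1$.
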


\begin{proposition}[Soft pruning gap]
\label{prop:soft_pruning_gap}
Assume \(\gamma=1\) and finite local path sets. Let \(\mathcal T_h^{\mathrm{mixed}}\) and \(\mathcal T_h^{\mathrm{same}}\) be the path sets induced by a mixed-type rule and a same-type restricted rule, with \(\mathcal T_h^{\mathrm{same}}\subseteq\mathcal T_h^{\mathrm{mixed}}\). Define \(Z_{\mathrm{mixed}}=\sum_{\xi\in\mathcal T_h^{\mathrm{mixed}}}\exp(G_h(\xi)/\tau)\) and \(Z_{\mathrm{same}}\) analogously. Then
\begin{equation}
\label{eq:soft_pruning_gap}
0\le V_h^{\mathrm{lse},\mathrm{mixed}}(z,a_{\mathrm{prev}})-V_h^{\mathrm{lse},\mathrm{same}}(z,a_{\mathrm{prev}})=\tau\log\frac{Z_{\mathrm{mixed}}}{Z_{\mathrm{same}}}.
\end{equation}
\end{proposition}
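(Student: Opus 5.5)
The plan is to reduce Proposition~\ref{prop:soft_pruning_gap} to the path-space form of the LSE backup given by Theorem~\ref{thm:path_partition}. Apply that theorem once to the mixed rule and once to the same-type rule; this is valid because $\gamma=1$ and both path sets are finite. It gives
\[
V_h^{\mathrm{lse},\mathrm{mixed}}(z,a_{\mathrm{prev}})=\tau\log Z_{\mathrm{mixed}},\qquad
V_h^{\mathrm{lse},\mathrm{same}}(z,a_{\mathrm{prev}})=\tau\log Z_{\mathrm{same}}.
\]
These two values are computed by the same recursion Eq.~\eqref{eq:lse_backup}, only on different candidate rules. The cumulative score $G_h(\xi)$ of a given path is the sum of its edge scores $q(z_i,b_i)$. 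Each $q$ depends only on the state and the edit, not on the candidate rule. So a path lying in both $\mathcal T_h^{\mathrm{same}}$ and $\mathcal T_h^{\mathrm{mixed}}$ has the same score in both sums.

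Subtracting the two identities gives the equality part directly:
\[
V_h^{\mathrm{lse},\mathrm{mixed}}-V_h^{\mathrm{lse},\mathrm{same}}=\tau\log\frac{Z_{\mathrm{mixed}}}{Z_{\mathrm{same}}}.
\]
This step requires $Z_{\mathrm{same}}>0$, that is, $\mathcal T_h^{\mathrm{same}}$ is nonempty. I would note explicitly that this holds when the same-type candidate sets are nonempty along the graph. The fallback rules in Eqs.~\eqref{eq:c_after} and~\eqref{eq:c_first} ensure this whenever the mixed set is nonempty. If instead the same-type path set were empty, $V^{\mathrm{same}}=-\infty$ and the statement holds trivially in the extended sense.

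For nonnegativity, split $Z_{\mathrm{mixed}}$ over the inclusion $\mathcal T_h^{\mathrm{same}}\subseteq\mathcal T_h^{\mathrm{mixed}}$:
\[
Z_{\mathrm{mixed}}=Z_{\mathrm{same}}+\sum_{\xi\in\mathcal T_h^{\mathrm{mixed}}\setminus\mathcal T_h^{\mathrm{same}}}\exp(G_h(\xi)/\tau)\ge Z_{\mathrm{same}},
\]
since every term in the extra sum is positive. Because $\tau>0$ and $\log$ is monotone, the ratio $Z_{\mathrm{mixed}}/Z_{\mathrm{same}}\ge 1$ gives a nonnegative gap. It is also worth remarking, to contrast with Proposition~\ref{prop:sametype_recovery}, that the gap is strictly positive as soon as any path is removed. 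This holds even when the maximizing path survives the pruning.

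The only real obstacle is bookkeeping rather than analysis. I must make sure that "the same path" means the same sequence of edits $(b_0,\dots,b_{h-1})$ from the same root state $z$ with the same $a_{\mathrm{prev}}$. Then the induced states $z_i$, and hence the scores $q(z_i,b_i)$, coincide under both rules. With that, the inclusion of path sets transfers directly to the inclusion of summands in the two partition functions.
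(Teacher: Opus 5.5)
Your proposal is correct and follows essentially the paper's proof: you invoke Theorem~\ref{thm:path_partition} for both rules to get $V_h^{\mathrm{lse}}=\tau\log Z$, use the inclusion $\mathcal T_h^{\mathrm{same}}\subseteq\mathcal T_h^{\mathrm{mixed}}$ and positivity of the summands to get $Z_{\mathrm{mixed}}\ge Z_{\mathrm{same}}>0$, and subtract. Your explicit remarks that $Z_{\mathrm{same}}>0$ requires a nonempty same-type path set and that a shared path has the same score under both rules spell out two points the paper uses only implicitly.
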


This subset statement applies directly to ST-after, whose candidate graph is contained in the mixed local top-\(K\) graph. For ST-first, the analogous relative LSE value is a comparison between two different local graphs, because ST-first can select candidates outside the mixed local top-\(K\) shortlist.

\begin{proof}[Proof of Proposition~\ref{prop:sametype_recovery}]
Because
\begin{equation}
    \mathcal C_{\mathrm{same}}(z',a')
    \subseteq
    \mathcal C_{\mathrm{mixed}}(z',a')
\end{equation}
for every local state and previous edit in the recursion, the feasible path set under same-type pruning is a subset of the feasible path set under the mixed-type local graph.  Therefore,
\begin{equation}
    V_h^{\max,\mathrm{same}}(z,a_{\mathrm{prev}})
    \le
    V_h^{\max,\mathrm{mixed}}(z,a_{\mathrm{prev}}).
\end{equation}
By assumption, there exists a mixed-type optimal path
$\xi^\star=(b_0^\star,\ldots,b_{h-1}^\star)$ whose every edit remains feasible under the same-type restricted candidate sets.  Hence $\xi^\star$ is feasible for the same-type problem.  Let $G_h(\xi^\star)$ be its local path score.  Then
\begin{equation}
    V_h^{\max,\mathrm{same}}(z,a_{\mathrm{prev}})
    \ge
    G_h(\xi^\star)
    =
    V_h^{\max,\mathrm{mixed}}(z,a_{\mathrm{prev}}).
\end{equation}
Combining the two inequalities gives equality.
\end{proof}

\begin{proof}[Proof of Proposition~\ref{prop:soft_pruning_gap}]
By Theorem~\ref{thm:path_partition}, the LSE values can be written as local path partition functions:
\begin{equation}
    V_h^{\mathrm{lse},\mathrm{mixed}}(z,a_{\mathrm{prev}})
    =
    \tau\log Z_{\mathrm{mixed}},
    \qquad
    V_h^{\mathrm{lse},\mathrm{same}}(z,a_{\mathrm{prev}})
    =
    \tau\log Z_{\mathrm{same}}.
\end{equation}
Since $\mathcal T_h^{\mathrm{same}}\subseteq\mathcal T_h^{\mathrm{mixed}}$ and every term $\exp(G_h(\xi)/\tau)$ is nonnegative, we have
\begin{equation}
    Z_{\mathrm{mixed}}\ge Z_{\mathrm{same}}>0.
\end{equation}
Therefore,
\begin{align}
    V_h^{\mathrm{lse},\mathrm{mixed}}(z,a_{\mathrm{prev}})
    -
    V_h^{\mathrm{lse},\mathrm{same}}(z,a_{\mathrm{prev}})
    &=
    \tau\log Z_{\mathrm{mixed}}
    -
    \tau\log Z_{\mathrm{same}} \\
    &=
    \tau\log\frac{Z_{\mathrm{mixed}}}{Z_{\mathrm{same}}}
    \ge 0.
\end{align}
This proves the stated identity and nonnegativity.
\end{proof}

\section{Task-specific instantiations}
\label{app:instantiations}

Across tasks, LPDP uses the same root re-solving recursion. What changes is the conditioning context, the frozen oracle used to score candidate sequences, and the rollout window where reward guidance is applied. This section clarifies these task-level choices so that the method section can remain task-agnostic.

\paragraph{Enhancer LPDP-Edit.}
For enhancer optimization, the state is a variable-length DNA sequence and the action is an explicit edit
\begin{equation}
    a=(s,e,v),
\end{equation}
where \(s\) is the site, \(e\in\{\mathrm{sub},\mathrm{ins},\mathrm{del}\}\) is the edit type, and \(v\) is the inserted or substituted nucleotide when applicable. The frozen gReLU-based oracle scores predicted HepG2 regulatory activity~\citep{lal2025grelu,wang2025drakes}. We apply guidance in the first 16 steps of the 256-step edit-flow schedule because early edits have the largest opportunity to shape the global regulatory sequence pattern before later steps perform local refinements.

\paragraph{Splice LPDP-Inpaint.}
For exon--intron--exon inpainting, the generator is conditioned on left and right exon contexts and generates the intervening intron through the same substitution, insertion, and deletion actions. The frozen SpliceAI oracle scores the intended donor and acceptor junctions~\citep{jaganathan2019spliceai}. We apply guidance in the last 16 steps of the 256-step schedule because the splice reward becomes most actionable once a coarse intron context and candidate donor/acceptor boundary neighborhoods have formed. At each guided step, the algorithm is unchanged: LPDP scores root edits, retains a root band, and solves the bounded typed local graph around each retained root.

\section{Data and metric definitions}
\label{app:data_metrics}

\subsection{Enhancer data construction}
\label{app:enhancer_data}
The enhancer benchmark is constructed from GRCh38-derived candidate regulatory intervals. The sequence pool contains variable-length DNA sequences of 200--400 bp, including dELS-derived candidates and background regions. dELS and background annotations are derived from ENCODE candidate cis-regulatory element resources~\citep{encode2020expanded}. Background intervals exclude annotated cCREs, promoters, and blacklist regions so that they provide a broad non-enhancer reference set.

For each candidate sequence, five 200-bp windows are extracted. If the sequence is longer than 200 bp, windows are placed at uniformly spaced positions and clamped to remain inside the sequence; if the sequence is exactly 200 bp, the same window is repeated. Each window is evaluated by the frozen gReLU activity oracle~\citep{lal2025grelu}. The final sequence-level label is the mean of the top two window predictions. The reported enhancer benchmark optimizes the HepG2 activity output.

\subsection{Splice inpainting data construction}
\label{app:splice_data}

The splice inpainting dataset is constructed from exon--intron--exon triplets extracted from the GRCh38 primary assembly using GENCODE gene annotations~\citep{frankish2021gencode}. For each annotated transcript, consecutive exon pairs and their intervening intron are extracted as \((\texttt{Exon}_{5'},\texttt{Intron},\texttt{Exon}_{3'})\) triplets and oriented to the coding strand. Triplets are filtered to retain canonical GT donor and AG acceptor junctions and introns in the target length range used by the edit-flow inpainting model. Each example provides a left exon context, an intron to be generated, and a right exon context; generated introns are inserted back between the two exons for oracle evaluation.

For reward evaluation, each exon--intron--exon sequence is placed into the input window required by the frozen SpliceAI backend~\citep{jaganathan2019spliceai}. We extract the donor probability at the intended donor junction and the acceptor probability at the intended acceptor junction. The primary scalar reward is the geometric mean of the two intended junction scores,
\begin{equation}
  R_{\mathrm{splice}}(x)
  =
  \sqrt{D(x)A(x)},
  \label{eq:splice_reward}
\end{equation}
where \(D(x)\) and \(A(x)\) denote donor and acceptor probabilities returned by the frozen oracle.

\subsection{Enhancer metrics}
\label{app:metric_definitions}

\paragraph{Predicted activity.}
Predicted activity is the held-out evaluation score of a generated sequence under the frozen HepG2 activity oracle. If \(R_{\mathrm{eval}}\) is the evaluation oracle and \(\{x_i\}_{i=1}^N\) are generated sequences, we report \(N^{-1}\sum_i R_{\mathrm{eval}}(x_i)\).

\paragraph{Calls/sample.}
Calls/sample denotes the average number of uncached reward-oracle evaluations required to produce one final sample, i.e., oracle cache misses rather than total oracle requests. Because repeated queries can be served from cache, this quantity is a cost descriptor rather than an optimization objective and need not increase monotonically with deeper lookahead or larger search trees. In the main benchmark, baseline widths, particle counts, and search depths are chosen to keep methods at a broadly comparable oracle-computation scale, although exact request counts and cache-hit behavior may still vary across methods.

\paragraph{Base trajectory log-likelihood.}
Base trajectory log-likelihood measures how likely the selected edit trajectory is under the frozen base edit-flow model:
\begin{equation}
    \mathrm{BaseTrajLL}
    =
    \frac{1}{N}\sum_{i=1}^N
    \frac{1}{T_i}
    \sum_{t=0}^{T_i-1}
    \log p_0(a_{i,t}\mid x_{i,t},t).
\end{equation}
Higher values indicate trajectories that remain closer to the base generator's edit distribution. This metric complements reward: a method can obtain high activity while moving far away from the base flow, or obtain a more moderate reward gain with higher trajectory plausibility.

\paragraph{3-mer JSD.}
Let \(P_3\) and \(Q_3\) be normalized 3-mer count distributions for generated and reference sequences, and let \(M_3=(P_3+Q_3)/2\). We compute
\begin{equation}
    \mathrm{JSD}_3(P_3,Q_3)
    =
    \frac{1}{2}\mathrm{KL}(P_3\|M_3)+\frac{1}{2}\mathrm{KL}(Q_3\|M_3).
\end{equation}
Lower values indicate smaller global 3-mer composition drift. This is the primary quantitative distributional metric for enhancer experiments.

\paragraph{Pooled reference support.}
For the UMAP visualization in Figure~\ref{fig:enhancer_main}, sequences are embedded using normalized 3-mer frequency vectors followed by a fixed two-dimensional UMAP projection~\citep{mcinnes2018umap}. We fit one density model to the union of top-dELS and background reference embeddings and report the average log density of generated samples under this pooled reference model. This score is a qualitative support diagnostic: it asks whether generated samples occupy high-support regions of the combined reference manifold, not whether they are specifically top-dELS-like. The primary quantitative distributional comparison remains 3-mer JSD in the original 3-mer space.

\paragraph{JASPAR correlation.}
JASPAR correlation compares motif-scan statistics between generated and reference sequences across a panel of transcription-factor motifs~\citep{jaspar2024}. Higher values indicate stronger motif-level similarity to the reference distribution.

\subsection{Splice metrics}
\label{app:splice_metrics}

\paragraph{Splice geomean.}
Splice geomean is the primary scalar splice-quality metric:
\begin{equation}
    \mathrm{SpliceGeomean}(x)=\sqrt{D(x)A(x)}.
\end{equation}
It rewards sequences for which both intended junctions receive high SpliceAI scores. A method must improve both donor and acceptor probabilities to obtain a high geomean.

\paragraph{Splice minimum.}
Splice minimum records the weaker of the two intended splice-site scores,
\begin{equation}
    \mathrm{SpliceMin}(x)=\min\{D(x),A(x)\}.
\end{equation}
It penalizes one-sided solutions where only the donor or only the acceptor becomes strong.

\paragraph{Donor and acceptor scores.}
When detailed tables are shown, we also report \(N^{-1}\sum_i D(x_i)\) and \(N^{-1}\sum_i A(x_i)\) to show whether a method improves both splice junctions or only one side.

\paragraph{Donor GT rate.}
Donor GT rate is the fraction of generated introns whose intended donor boundary has the canonical \texttt{GT} dinucleotide. This is a sequence-level sanity metric complementary to the SpliceAI donor and acceptor probabilities.

\paragraph{Base trajectory log-likelihood.}
Splice base trajectory log-likelihood is computed under the frozen conditional edit-flow inpainting model:
\begin{equation}
    \mathrm{BaseTrajLL}_{\mathrm{splice}}
    =
    \frac{1}{N}\sum_{i=1}^N\frac{1}{T_i}
    \sum_{t=0}^{T_i-1}\log p_0(a_{i,t}\mid x_{i,t},c_i,t),
\end{equation}
where \(c_i\) is the exon context. It should be compared only among splice methods because the conditioning context differs from the enhancer setting.

\paragraph{Calls/sample.}
Calls/sample denotes the average number of uncached frozen SpliceAI reward evaluations required to produce one generated intron, i.e., cache misses rather than total oracle requests. As in enhancer optimization, we report it as a cost descriptor rather than as an objective, and use it to check that methods are compared at a broadly similar oracle-computation scale.

\section{Appendix experiment suites}
\label{app:appendix_suites}

This section reports targeted operating-point ablations for LPDP. 
We keep the LPDP operator fixed---exact root scoring, root-band selection, bounded local graph construction, the Mixed/ST-after/ST-first candidate rules, and the Max/LSE backups---and vary only inference-time choices. 
Rather than exhaustively tuning LPDP, these ablations test whether the main operating point is an isolated choice and examine three claims used in the main text: enhancer rewards are most useful in a front-loaded guidance window, splice rewards are most useful in a back-loaded guidance window, and \(H=2\) is the smallest nontrivial root-local re-solving horizon beyond one-step root scoring.
Across tasks, the defaults remain competitive under the chosen oracle budget, while the trends reveal reward--compute trade-offs.
Calls/sample is reported as a cost descriptor throughout.

\subsection{Guidance-window position}
\label{app:window_position_ablations}
\label{app:splice_ablation_suites}

The first ablation varies \emph{where} the 16 guided rollout steps are placed within the 256-step edit-flow schedule. 
For enhancer optimization, the main text uses a front-loaded window because early edits can shape global regulatory sequence structure. 
For splice inpainting, the main text uses a back-loaded window because the splice oracle is most informative after an intron context and donor/acceptor boundary candidates have formed.

\begin{table}[h]
\centering
\small
\caption{Guidance-window position ablation. The LPDP operator and hyperparameters are fixed; only the location of the 16-step guided window changes. For enhancer, the main reward is predicted HepG2 activity. For splice, the main reward is Splice-Geomean.}
\label{tab:app_window_position_ablation}
\resizebox{0.88\linewidth}{!}{
\begin{tabular}{llccc}
\toprule
Task & LPDP variant & Guided window & Main reward $\uparrow$ & Calls/sample \\
\midrule
Enhancer & LPDP-ST-after-LSE & First-16 & 7.964 & 39,581 \\
Enhancer & LPDP-ST-after-LSE & Middle-16 & 7.577 & 39,329 \\
Enhancer & LPDP-ST-after-LSE & Last-16 & 7.473 & 34,370 \\
\midrule
Splice & LPDP-ST-first-Max & Early-16 & 0.614 & 36,648 \\
Splice & LPDP-ST-first-Max & Middle-16 & 0.756 & 31,243 \\
Splice & LPDP-ST-first-Max & Late-16 & 0.834 & 24,350 \\
\bottomrule
\end{tabular}}
\end{table}

\paragraph{Interpretation.}
The default window positions match the reward geometry of each task. Enhancer guidance is most effective in the first 16 steps, consistent with early edits shaping global regulatory structure. Splice guidance is strongest in the late window, when a coarse intron and candidate donor/acceptor neighborhoods are already present. This supports using task-dependent guidance windows while keeping the LPDP operator itself fixed.

\subsection{Guidance-window length}
\label{app:window_length_ablations}

The second ablation varies \emph{how long} reward guidance is applied while keeping the task-default position fixed: front-loaded for enhancer and back-loaded for splice. 
This tests whether the main 16-step guidance window is a stable operating point rather than a single arbitrary schedule choice.

\begin{table}[h]
\centering
\small
\caption{Guidance-window length ablation. LPDP uses the task-default window position: front-loaded guidance for enhancer and back-loaded guidance for splice.}
\label{tab:app_enhancer_schedule_ablation}
\resizebox{0.82\linewidth}{!}{
\begin{tabular}{llccc}
\toprule
Task & LPDP variant & Guided steps & Main reward $\uparrow$ & Calls/sample \\
\midrule
Enhancer & LPDP-ST-after-LSE & First-8 & 6.661 & 19,690 \\
Enhancer & LPDP-ST-after-LSE & First-16 & 7.964 & 39,581 \\
Enhancer & LPDP-ST-after-LSE & First-32 & 9.044 & 78,979 \\
\midrule
Splice & LPDP-ST-first-Max & Late-8 & 0.637 & 10,615 \\
Splice & LPDP-ST-first-Max & Late-16 & 0.834 & 24,350 \\
Splice & LPDP-ST-first-Max & Late-32 & 0.835 & 40,545 \\
\bottomrule
\end{tabular}}
\end{table}

\paragraph{Interpretation.}
Increasing the guidance window improves enhancer reward, but also increases oracle cost almost proportionally; the 16-step window is therefore used as a cost-controlled main operating point rather than the maximum-reward setting. For splice inpainting, the reward saturates from 16 to 32 late guided steps, so the 16-step window achieves nearly the same splice score with substantially fewer uncached oracle calls. These trends suggest that the default guided windows are not arbitrary single points, but practical reward--compute trade-offs.

\subsection{Future-correction strength}
\label{app:strength_ablations}

The LPDP score adds a local future correction to the one-step root score, scaled by \(\lambda\). 
We vary \(\lambda\) while keeping the root band, local graph, candidate rule, backup, and guided window fixed. 
This tests whether the reported operating point is robust to the relative weight assigned to the local continuation value.

\begin{table}[h]
\centering
\small
\caption{LPDP future-correction strength ablation. The LPDP operator is fixed and only the strength \(\lambda\) of the local future correction is varied.}
\label{tab:app_enhancer_strength_ablation}
\resizebox{0.75\linewidth}{!}{
\begin{tabular}{llccc}
\toprule
Task & LPDP variant & \(\lambda\) & Main reward $\uparrow$ & Calls/sample \\
\midrule
Enhancer & LPDP-ST-after-LSE & 0.25 & 7.747 & 38,851 \\
Enhancer & LPDP-ST-after-LSE & 0.5 & 7.964 & 39,581 \\
Enhancer & LPDP-ST-after-LSE & 1.0 & 7.781 & 39,455 \\
\midrule
Splice & LPDP-ST-first-Max & 0.25 & 0.825 & 26,753 \\
Splice & LPDP-ST-first-Max & 0.5 & 0.834 & 24,350 \\
Splice & LPDP-ST-first-Max & 1.0 & 0.653 & 23,176 \\
\bottomrule
\end{tabular}}
\end{table}

\paragraph{Interpretation.}
Moderate future-correction strength is the most reliable setting across tasks. Setting \(\lambda=0.5\) improves over a weaker correction in both tasks, while \(\lambda=1.0\) can over-trust the truncated local graph, especially in splice inpainting. This supports interpreting \(\lambda\) as a damping coefficient for an approximate local future value rather than as an independent reward scale.

\subsection{Local lookahead horizon}
\label{app:horizon_ablations}

The final ablation varies the LPDP horizon \(H\). 
At \(H=1\), the local continuation term is zero and LPDP reduces to exact one-step root scoring over the retained root band. 
At \(H=2\), each retained root receives one local continuation step, which is the main setting and the smallest nontrivial re-solving horizon. 
Larger \(H\) values test whether deeper local lookahead improves reward enough to justify the additional oracle cost.

\begin{table}[h]
\centering
\small
\caption{Local lookahead horizon ablation. \(H=1\) reduces LPDP to one-step root scoring, while \(H=2\) is the main minimal nontrivial root-local re-solving horizon.}
\label{tab:app_splice_horizon_ablation}
\resizebox{0.82\linewidth}{!}{
\begin{tabular}{llccc}
\toprule
Task & LPDP variant & Horizon \(H\) & Main reward $\uparrow$ & Calls/sample \\
\midrule
Enhancer & LPDP-ST-after-LSE & 1 & 7.762 & 38,909 \\
Enhancer & LPDP-ST-after-LSE & 2 & 7.964 & 39,581 \\
Enhancer & LPDP-ST-after-LSE & 3 & 7.839 & 39,421 \\
\midrule
Splice & LPDP-ST-first-Max & 1 & 0.761 & 21,293 \\
Splice & LPDP-ST-first-Max & 2 & 0.834 & 24,350 \\
Splice & LPDP-ST-first-Max & 3 & 0.859 & 24,154 \\
\bottomrule
\end{tabular}}
\end{table}

\paragraph{Interpretation.}
The horizon ablation separates one-step root scoring from local re-solving. Moving from \(H=1\) to \(H=2\) improves the main reward in both tasks, showing that the local future correction contributes beyond exact one-step scoring. Deeper lookahead is not uniformly better: \(H=3\) slightly improves splice reward but does not improve enhancer reward in this diagnostic, and it introduces a larger local search tree whose realized uncached call count depends on cache reuse. We therefore use \(H=2\) as the main minimal nontrivial horizon, while viewing deeper budget-aware re-solving as a natural extension.

\section{Design assumptions and negative ablations}
\label{app:negative_results}
\label{app:design_assumptions}

LPDP deliberately uses a small amount of structured lookahead rather than a large global trajectory search. 
This section records the main design assumptions and the negative ablations that shaped the final method.

\paragraph{Root-band approximation.}
LPDP allocates local lookahead only to roots whose exact one-step tilted score lies near the best root. 
This does not assert that the globally best finite-horizon trajectory must start inside the band. 
Instead, it is an oracle-allocation rule: exact root scoring identifies competitive edits, and the expensive local DP is spent only around those roots.

\paragraph{Root-band truncation.}
The root band is a budgeted approximation: LPDP spends local lookahead only on roots that are competitive under exact one-step tilted scoring. 
This may miss a root with low immediate score but unusually favorable long-range continuation value. 
We therefore interpret the horizon and guidance-window ablations as operating-point diagnostics rather than as a global optimality guarantee.

\paragraph{Locality assumption.}
The local perturbation graph assumes that the most informative short continuations of a retained root are found near the most recent edit site. 
This is an action-geometry assumption of edit flows, not a guarantee about long-horizon global optimality. 
The operating-point ablations above keep this local graph construction fixed and vary when, how long, and how strongly the local re-solving correction is applied.

\paragraph{Minimal nontrivial horizon.}
The main LPDP setting uses local depth \(H=2\). 
At \(H=1\), LPDP reduces to exact one-step root selection. 
At \(H=2\), each retained root receives one local continuation step, which is the smallest setting that can re-rank roots using future information. 
Beam baselines use depth 2, giving them a broader sequence-level frontier over the same two-edit horizon, while LPDP restricts the second step to a root-conditioned local graph.

\paragraph{Aggressive future twists.}
We tested increasingly aggressive online twist and lookahead variants in enhancer sanity checks. 
Conservative settings often tied exact one-step planning, whereas aggressive twists degraded reward or cost-efficiency. 
These results motivated the final emphasis on root-band selection plus bounded local graph DP rather than more elaborate future-twist heuristics.

\section{Reproducibility details}
\label{app:reproducibility}

All experiments use frozen base edit-flow generators and frozen reward oracles. Main tables use \(N=500\) generated samples per method, while appendix ablations use smaller sample counts when only diagnostic trends are needed.

\paragraph{Compute resources.}
All base edit-flow generators and inference experiments were run on a single NVIDIA RTX 3090 GPU.
Training the enhancer edit-flow generator for 5{,}000 steps took approximately 48 hours, and training the splice edit-flow generator for 5{,}000 steps took approximately 72 hours.
After training, all reward-guided experiments use frozen generators and frozen reward oracles.
For inference, the main enhancer setting takes approximately 1 minute per generated sample, while the main splice setting takes approximately 3 minutes per generated sample, with variation across baselines depending on oracle-call count and caching.
We report calls/sample as the average number of uncached reward-oracle evaluations used to generate one final sample.

\paragraph{Comparable search budgets.}
Baseline parameters are selected to match the LPDP horizon and oracle-call scale as closely as possible rather than to tune an oracle-call Pareto frontier. Beam uses depth 2 to match LPDP's two-edit lookahead horizon \(H=2\); CEM, TDS, and SMC use population or particle counts chosen to yield comparable calls/sample in the main tables. Calls/sample is therefore reported as a cost descriptor for each method.

\paragraph{Enhancer defaults.}
The main enhancer LPDP setting uses reward weight \(\beta=20.0\), root-band width \(\delta=2.0\), root cap \(K_{\mathrm{root}}=16\), local depth \(H=2\), local radius \(r=1\), local cap \(K_{\mathrm{loc}}=8\), LSE temperature \(\tau=1.0\), discount \(\gamma=1.0\), and future-correction strength \(\lambda=0.5\). The default guided schedule is \(256/16\): 256 base edit-flow steps with reward guidance applied during the first 16 rollout steps.

\paragraph{Splice defaults.}
The splice experiments use the same LPDP recursion, candidate rules, and backup definitions, but with exon-conditioned edit-flow states and a frozen SpliceAI reward. The default guided schedule is also based on a 256-step edit-flow rollout, with reward guidance applied during the last 16 rollout steps. This late window is used because donor/acceptor boundary candidates are more meaningful once a coarse intron context has formed.

\begin{table}[h]
\centering
\small
\caption{Default LPDP and baseline settings for the enhancer benchmark.}
\label{tab:repro_lpdp_defaults}
\resizebox{0.7\linewidth}{!}{
\begin{tabular}{lc}
\toprule
Parameter & Value \\
\midrule
Samples per method & 500 \\
Total edit-flow steps & 256 \\
Reward-guided rollout steps & 16 \\
Reward weight \(\beta\) & 20.0 \\
Root-band width \(\delta\) & 2.0 \\
Root cap \(K_{\mathrm{root}}\) & 16 \\
Local depth \(H\) & 2 \\
Local radius \(r\) & 1 \\
Local cap \(K_{\mathrm{loc}}\) & 8 \\
LSE temperature \(\tau\) & 1.0 \\
Discount \(\gamma\) & 1.0 \\
LPDP strength \(\lambda\) & 0.5 \\
CEM samples / elites / rounds & \(64/8/2\) \\
Beam width / depth & \(8/2\) \\
TDS temperature / support top-\(K\) & \(1.0/8\) \\
SMC particles / depth / proposal top-\(K\) & \(64/2/32\) \\
\bottomrule
\end{tabular}
}
\end{table}



\end{document}